\newtheorem{thm}{Theorem}[section]
\newtheorem{cor}[thm]{Corollary}
\newtheorem{lem}[thm]{Lemma}
\newtheorem{cond}[thm]{Condition}
\newtheorem{assume}[thm]{Assumption}
\newtheorem{prop}[thm]{Proposition}
\newtheorem{defn}[thm]{Definition}
\newcommand{\ouralgo}{\textsc{MF-RMAB}}
\newcommand{\ourfair}{\textsc{Merit Fair}}
\title{Fairness of Exposure in Online Restless Multi-armed Bandits}
\author{Archit Sood}
\affiliation{
  \institution{Indian Institute of Technology Ropar}
  \city{Rupnagar}
  \country{India}}
\email{2020mcb1230@iitrpr.ac.in}
\author{Shweta Jain}
\affiliation{
  \institution{Indian Institute of Technology Ropar}
  \city{Rupnagar}
  \country{India}}
\email{shwetajain@iitrpr.ac.in}
\author{Sujit Gujar}
\affiliation{
  \institution{International Institute of Information Technology Hyderabad}
  \city{Hyderabad}
  \country{India}}
\email{sujit.gujar@iiit.ac.in}
\begin{abstract}
    Restless multi-armed bandits (RMABs) generalize the multi-armed bandits where each arm exhibits Markovian behavior and transitions according to their transition dynamics. Solutions to RMAB exist for both offline and online cases. However, they do not consider the distribution of pulls among the arms. Studies have shown that optimal policies lead to  unfairness, where some arms are not exposed enough. Existing works in fairness in RMABs focus heavily on the offline case, which diminishes their application in real-world scenarios where the environment is largely unknown. In the online scenario, we propose the first fair RMAB framework, where each arm receives pulls in proportion to its merit. We define the merit of an arm as a function of its stationary reward distribution. We prove that our algorithm achieves sublinear fairness regret in the single pull case  $O(\sqrt{T\ln T})$,  with $T$ being the total number of episodes. Empirically, we show that our algorithm performs well in the multi-pull scenario as well.  
\end{abstract}
\keywords{Restless bandits, Online learning, Fairness}
\newcommand{\BibTeX}{\rm B\kern-.05em{\sc i\kern-.025em b}\kern-.08em\TeX}
\theoremstyle{acmplain}
\begin{document}


\pagestyle{fancy}
\fancyhead{}


\maketitle 


\section{Introduction}
\emph{Bandit algorithms} have gained increasing popularity in recent years in modelling decision-making problems where the decision-maker interacts with an unknown environment. At each round, the decision maker has to choose a subset of choices or arms, resulting in certain rewards for the decision maker. The goal, therefore, is to focus both on learning the environment while also choosing arms in such a way as to maximize the total reward. Bandit algorithms are useful in various applications ranging from marketing~\cite{rothschild1974two}, crowdsourcing ~\cite{abraham2013adaptive}, resource allocations~\cite{berry1985bandit}, healthcare~\cite{biswas2021learn, killian2023equitable}, etc. 

\emph{Restless Multi-Armed Bandits} (RMABs) are a class of MABs where each arm has a Markov Decision Process (MDP) associated with it. Each arm has its own states, actions, transition dynamics, and reward functions. In RMABs, the number of actions associated with each arm could be two or more two~\cite{hodge2015asymptotic, killian2021beyond,killian2021q}. In line with most of the literature, we consider two actions per arm -- to pull or not pull the arm ~\cite{whittle1988restless,jung2019regret,biswas2021learn,wang2023optimistic}. A slightly different line of work is around rested bandits, where at each round, only the arms that receive a pull transition from one state to another. However in restless bandits, all the arms transition from one state to the next state, irrespective of whether they are pulled or not. In RMABs, since arms transition irrespective of whether pulled or not, they exhibit different transition dynamics for the action taken (pull/not pull). It is this \emph{restless} nature of the arms that makes RMABs applicable to many domains such as network scheduling ~\cite{modi2019transfer}, anti-poaching ~\cite{qian2016restless}, healthcare ~\cite{mate2022field}, etc.

Recently, a lot of works have been using restless bandits to model preventive interventions in public healthcare scenarios ~\cite{biswas2021learn, herlihy2023planning, killian2023equitable, mate2020collapsing, killian2023adherence}. Here, each arm corresponds to a patient or a beneficiary, whereas the decision maker may be a public hospital or agency that provides intervention to these patients. Providing an intervention to each patient is considered equivalent to pulling an arm.
Because of the limited resources, such as the limited number of workers working at the hospital or agency, intervention can be provided to only a limited number of patients at a time. When intervened, each patient's state (good/bad) changes depending on whether they received the intervention or not. It is essential to look for policies that provide the maximum benefit to the public, i.e., policies that ensure most patients remain in a good state for longer. 

Some of the preliminary literature on restless bandits assumes that the transition dynamics for each arm's MDP are known i.e. \emph{offline setting}. In general, it has been proven that finding an optimal policy for RMABs even when the transition dynamics are known is PSPACE-hard ~\cite{papadimitriou1994complexity}. ~\citet{whittle1988restless} provided an indexing policy that is asymptotically optimal given that the arms are indexable ~\cite{weber1990index,akbarzadeh2019restless}. Recently, there have been some advancements in \emph{online setting}, where these transition dynamics are not known and are learned over a period of time. To this, researchers have explored Thompson Sampling~\cite{jung2019regret,jung2019thompson}, Upper Confidence Bound (UCB)~\cite{ortner2012regret,wang2023optimistic} and Model-free Q-learning approaches ~\cite{fu2019towards,avrachenkov2022whittle,biswas2021learn}, among many others. However, all these approaches focus only on finding the optimal policy -- leading to some arms being completely ignored~\cite{prins2020incorporating}. As in our running example where arms model patients, this represents a major problem: the optimal policies would focus only on patients who require the most interventions and ignore the patients who rarely need interventions. However, in public healthcare, it becomes important to focus on all kinds of patients and not just a few, so as to provide unbiased healthcare to society.

Fairness in RMABs is an upcoming research direction that seeks to address this problem. Current work on fairness in RMABs primarily focuses on the offline setting~\cite{herlihy2023planning,li2023avoiding,prins2020incorporating,mate2021risk}. Only a few fairness notions are prevalent in the literature on the restless bandits setting. The first notion ensures that each arm should be pulled at least once every fixed time period~\cite{li2022efficient}, and the second notion provides a lower bound on each arm's probability of receiving a pull by user-defined constants~\cite{herlihy2023planning}. These two fairness notions do not consider the heterogeneity of arms and introduce universal fairness constraints. ~\citet{li2023avoiding} talk about allocation vectors and that a more rewarding allocation vector should be chosen with a higher probability than a worse allocation vector. It still does not guarantee a minimum exposure guarantee to each arm and in worst case scenario can lead to some arms barely receiving pulls. Thus, it is a much weaker fairness notion. ~\citet{killian2023equitable} considers equitable group fairness where each arm belongs to a particular group, and the fairness notion provides enough exposure to each group so as to ensure equal outcomes. In this approach as well, the pulls may be still distributed unfairly within a group. Moreover, it requires apriori knowledge of which group an arm belongs to. Further, all the above works assume that the transition dynamics are known beforehand and construct their policies based on this assumption. To our knowledge, only~\citet{li2022efficient} explore fairness in online RMABs, where they consider equality focused fairness with universal fairness constraints. Therefore, there is a need to design online RMABs with stronger fairness guarantees.

We propose that fairness should be defined with respect to the steady-state distribution of the MDPs of the arms. The steady-state is important because it talks about the state that the arms would end up in when we run a policy sufficiently long, which would happen in a real-world setting. We define the goodness of an arm by the difference in the steady-state distribution of the arm being in the \emph{good} state when we always pull the arm, compared to when we never pull the arm. This metric is indicative of how much an arm benefits from intervention. If we did not use the difference and instead just chose the steady state probability of being in the good state to be the merit of an arm, then it may so happen that this arm would have ended up in the good state without even needing intervention, and we would have wasted our limited budget on an arm which did not necessarily need it. Taking the difference helps us identify the arms that are impacted the most by our help. Once we have a notion of the merit of an arm, one can define meritocratic fairness similar to that of \cite{wang2021fairness}. We call our notion of fairness as \emph{\ourfair\ } where each arm is pulled with probability proportional to how much benefit we obtain at a steady state by pulling the arm. \ourfair\ notion is better than the existing notions in RMABs because it does not use any universal fairness constant like in \cite{li2022efficient, herlihy2023planning}, but rather provides exposure to each arm based on its merit.

Generally, a multiple pull setting is considered in RMABs which means that at each round, the public hospital makes a call to K patients to be intervened. In this paper, for theoretical analysis, we primarily focus on single pull settings for the following reasons. Meritocratic fairness \cite{wang2021fairness} has been specially designed for pulling a single arm at each round. It is not clear how such merit-based fairness can be extended beyond multiple pulls. We show that extending merit-based fairness to multiple pulls will require some technical assumptions, which may not necessarily hold in practice. Even if such technical requirements are satisfied, with multiple pull RMAB, quantifying fairness is challenging. For example, suppose we have a budget to pull all the arms, then we will not be able to learn transition probabilities of action `not pulling' the arm effectively unless we forgo pulling an arm despite having the available budget. Hence, in this work, we focus on single-pull settings for theoretical aspects, though our algorithm can be extended to multiple pulls. Therefore, we study its efficacy on multiple pull settings on synthetic and real-world datasets. To the best of our knowledge, we are the first one to provide theoretical results in the online fair RMAB problem, and also the first one to extend the Fairness of Exposure~\cite{wang2021fairness} notion to Restless bandits. 
Our core contributions are:
\begin{itemize}
    \item We extend the exposure fairness notion of ~\citet{wang2021fairness} to an online RMAB setting.
    \item We provide a sublinear bound on fairness regret when only one arm is pulled and provide experimental results on both single pull and multiple pull cases.
    \item We validate the efficacy of our proposed algorithm on synthetic and real-world datasets ~\cite{kang2016modelling}.
\end{itemize}

The paper is structured as follows. Section 2 summarizes the required literature background. In Section 3, we define preliminaries and notations used consistently throughout the paper. Section 4 provides details to our approach and Section 5 provides its theoretical guarantees. We validate the efficacy of our method on real and synthetic data in Section 6.

\section{Related Work}
\subsection{Restless Bandits}
Whittle ~\cite{whittle1988restless} provides an asymptotically optimal indexing approach for RMABs when the transition dynamics are known. When the transition dynamics are unknown, previous works include Thompson Sampling based approaches ~\cite{jung2019regret,jung2019thompson} which construct policies based on a prior and then update the prior according to the feedback from the environment; Upper Confidence Bound based approaches ~\cite{ortner2012regret, wang2023optimistic} which maintain a confidence region for the transition dynamics and construct policies based on this region; and model-free approaches ~\cite{fu2019towards,avrachenkov2022whittle,biswas2021learn} which skip learning the environment dynamics altogether and only focus on finding the Whittle indices to determine the optimal action to take in each round. In this paper, we will be using upper confidence bound-based approaches to learn the transition dynamics.
\subsection{Fairness in MAB}
There have been multiple approaches to address fairness in multi-armed bandits. Here, we discuss some fairness notions available in the literature concerning stochastic bandits setting ~\cite{joseph2016fairness, heidari2018preventing, li2019combinatorial,chen2020fair, patil2021achieving,wang2021fairness}. ~\citet{joseph2016fairness} use the idea that a better arm is always selected with greater or equal probability than a worse arm. This fairness notion, while good for the optimal arm, may lead to many arms getting very minimal exposure. Later, some works define fairness in terms of providing minimum/maximum amount of exposure to each of the arms ~\cite{heidari2018preventing,li2019combinatorial,chen2020fair}. On similar lines, ~\citet{patil2021achieving} achieve anytime fairness guarantee, i.e., at each round, every arm will be pulled at least a fixed predetermined fraction of times. This notion requires the user to input a predetermined vector quantifying the exposure required for each arm. Later, ~\citet{wang2021fairness} extended the fairness notion to proportional fairness which requires that each arm be pulled with a probability proportional to its merit. The authors compare their algorithm with an optimal fair benchmark, which is aware of the true merit of each arm and does not need to learn anything. 

\subsection{Fairness in RMAB}
Fairness in Restless Bandits has only recently started to be explored. Offline algorithms, which assume full knowledge of the transition dynamics of each arm, include ~\cite{herlihy2023planning} which ensures that the probability of each arm being pulled is in the interval $[l,u]$ where $l,u$ are user-defined parameters. ~\citet{li2023avoiding} provide the fairness notion that an allocation vector should always be chosen with greater or more probability than a worse allocation vector. ~\citet{prins2020incorporating} consider RMABs where only the states of the arms that are pulled are observable and the rest are not, and their fairness notion is that each arm should be pulled at least once every fixed period of time. ~\citet{mate2021risk} show that their reward function can be shaped so as to incentivize fairness. Finally, ~\citet{killian2023equitable}  consider group fairness and frames the problem so as to find the allocation among the groups that maximizes the total social welfare. In an online setting, ~\citet{li2022efficient} provide a Q-learning Whittle Indexed based algorithm that operates similarly to ~\cite{patil2021achieving, chen2020fair}. Their fairness constraint that each arm receives a pull every fixed time period does not take into account the heterogeneity of the arms and they do not discuss any notions of regret while learning. Our work combines the fairness notion of ~\cite{wang2021fairness}, the Online RMAB approach of ~\cite{wang2023optimistic}, and the steady state reward formulation of ~\cite{herlihy2023planning}, to provide a novel analysis on regret in online Fair RMABs.   

\section{Preliminaries}
An RMAB problem is defined by a set of $N$ independent arms. Each arm $i \in [N]$ is characterized by a Markov Decision Process (MDP) given by $(\mathcal{S},\mathcal{A},\mathcal{R},P_i)$ with $\mathcal{S}, \mathcal{A}$, and $\mathcal{R}: \mathcal{S} \xrightarrow{} \mathbb{R}$ denoting the state space, action space, and reward function respectively. $P_i: \mathcal{S} \times \mathcal{A} \times \mathcal{S} \xrightarrow[]{} [0,1]$ is the transition probablity matrix for arm $i$. In the traditional RMAB setting, $\mathcal{S}, \mathcal{A}, \mathcal{R}$ are taken to be common among all the arms, and each arm differs only by their transition matrix $P_i$. The states are assumed to be fully observable. The action the decision-maker takes is governed by a policy $\pi$. The total number of episodes is $T$, where a policy $\pi^t$ is fixed for $t \leq T$, and is run for $H$ timesteps, where $H$ is the time horizon of an episode. For each timestep $h \in H$ in episode $t$, the decision-maker has to select $K \leq N$ arms according to $\pi^t$, where $K$ is the budget.  

In our setting, we assume $\mathcal{S} \coloneqq \{0,1\}$, where 0 denotes \emph{bad} state and 1 denotes \emph{good} state. There are two possible actions, i.e. $\mathcal{A} \coloneqq \{1,0\}$, indicating whether an arm is pulled or not respectively. We take the reward as $\mathcal{R}(s) = s \ \forall s \in \mathcal{S}$, i.e., we receive a reward of 1 when the arm is in the good state and a reward of 0 when the arm is in the bad state. As the state and action spaces are discrete and finite, we can view the transition matrix $P_i$ as a $|\mathcal{S}| \times |\mathcal{A}| \times |\mathcal{S}|$ dimensional matrix where $P_i(s,a,s') \in [0,1]$ is the probability of transitioning to state $s'$ if action $a$ is taken while in state $s$. 

In an online setting, $P_i$'s are unknown and need to be learned over multiple episodes. 
Let us denote the true transition matrix for an arm $i$ as $P^*_i$. We assume that $P^*_i$'s are \emph{non-degenerate}, i.e., there exists an $\epsilon > 0$ such that $\epsilon \leq P_i^*(s,a,s') \leq 1 - \epsilon \ \ \ \forall i \in [N], a \in \mathcal{A}, s,s \in \mathcal{S}'$.
Intuitively, this means there is no deterministic transition, and there is always a non-zero chance of ending up in any state after action. This is a natural assumption and maps many real-life scenarios such as healthcare applications where there are rarely any deterministic transitions. 

\subsection{\ourfair: Merit-based fairness in RMAB}\label{topic:exposure}

To define meritocratic fairness for the arms, we must define a certain merit of pulling to each arm. As standard in the MAB literature, we refer to it also as a \emph{reward}. 
Let $\mu_i^*$ denote the true reward of arm $i$, which we define formally later. Along a similar line to Wang et al. ~\cite{wang2021fairness}, let us define the Optimal Fair Policy as $Pr^*(K)$, where $Pr^*_i(K)$ is the probability that arm $i$ is among the $K$ chosen out of the $N$ total arms. Observe that $Pr^*_i(N)$ = 1 and that $Pr^*_i(1)$ = $\pi_i^*$, where $\pi^*$ is the probability distribution of being chosen over the arms. Note that $\sum_i \pi_i^* = 1$ irrespective of $K$. Let $g(\cdot)$ be a non-decreasing merit function that maps the reward of the arm to a positive value. Then, for the optimal fair policy, the following equation holds.
\begin{align}\label{eqn:fairness}
    \frac{Pr_i^*(K)}{g(\mu_i^*)} = \frac{Pr_j^*(K)}{g(\mu_j^*)} \ \ \forall i,j \in [N]
\end{align}
This implies that the exposure each arm gets is proportional to its merit. However, observe that if there is an arm $l$ such that $g(\mu_l)>K\times g(\mu_i)\;\forall i\neq l$, the probability of pulling the arm $l$ becomes more than $1$ to ensure desired fairness. Thus, we cannot guarantee such meritocratic fairness for multiple pulls.  Even if we assume $g(\cdot), \ \mu's$ are such that $Pr_i(\cdot)<1\;\forall i$, there are further challenges in learning $P_i$'s for large $K$. Hence, for theoretical analysis, we focus on $K=1$. Note that for $K=1$, Equation (\ref{eqn:fairness}) becomes 
\begin{align}\label{eqn:fairness_K=1}
    \frac{\pi_i^*}{g(\mu_i^*)} = \frac{\pi_j^*}{g(\mu_j^*)} \ \ \forall i,j \in [N]
\end{align}

Let $\pi = \{\pi^t\}_{t=1}^T$ be the policy learnt by our algorithm with $\pi^t$ being the employed policy at episode $t$. We define \emph{Fairness Regret} $FR^T$ as the difference between the optimal fair policy $\pi^*$ and our policy $\pi$ up to episode $T$. Mathematically,
\begin{align}\label{dfn:regret_def}
    FR^T = \sum_{t=1}^T \sum_{i \in [N]} \left | \pi_i^* - \pi_i^t \right |
\end{align}
Here, $\pi_i^t$ denotes the probability of pulling an arm $i$ in episode $t$.
We note that the notion of Reward Regret is also defined in ~\cite{wang2021fairness} which we do not analyze it in this work. This is because \cite{wang2021fairness} discusses the stochastic MAB setting, where the reward of the arm is the actual reward we get after pulling the arm. In our case, we define $\mu_i^*$ in such a way that it only represents the benefit of pulling an arm, and only exists to indicate which arm is better suited for receiving a pull. It does not correlate to the actual reward (given by the reward function $\mathcal{R}$) that we receive after pulling some arm. Therefore, the concept of reward regret does not make much sense in our work. For the sake of the theoretical analysis of the regret, we impose the same conditions on the merit function $g(\cdot)$ as imposed in ~\cite{wang2021fairness}.
\begin{cond} \label{cond:gamma}
    Merit of each arm is positive, i.e., $\min_\mu g(\mu) \geq \gamma$ for some $\gamma$ > 0.
\end{cond}
\begin{cond}\label{cond:L}
    Merit function is L-Lipschitz continuous, i.e., $\forall \mu_1,\mu_2, \\ {|g(\mu_1) - g(\mu_2)|} \leq {L|\mu_1 - \mu_2|}$ for some constant L > 0.
\end{cond}

\section{Methodology}
\subsection{Defining the reward}
We first define a reward that is based on steady state and is indicative of how much intervention an arm requires. Consider the policy discussed by ~\citet{herlihy2023planning} where each arm is pulled with some fixed probability $p_i$. Then this policy can be defined as $\pi_{PF}: \{i \ | \ i\in[N]\} \xrightarrow{}[1-p_i, \ p_i]^N$. Repeated application of this policy will result in a steady state distribution that tells us the probability of an arm being in state 1. Let us denote $f(P_i,p_i)$ to be the steady state probability of arm $i$ being in state 1, when followed a policy $\pi_{PF}$. Once, we acquire these probabilities, the reward of an arm can be naturally defined as: 
\begin{align}\label{defn:reward}    
    \mu_i = f(P_i,1) - f(P_i,0)
\end{align}
This reward signifies the difference at steady state when we always pull arm $i$ as compared to when we never pull that arm. In other words, the reward represents the benefit of pulling an arm in the long run as compared to the loss the algorithm would have incurred if it had not pulled the arm.

We can now calculate the steady state probabilities as follows. 
If the transition matrix for arm $i$ is $P_i$, at steady state, we should have
\begin{align*}
    f(P_i,p_i)[(1-p_i)P_i(1,0,1) + p_i P_i(1,1,1)] \ + \\ (1-f(P_i,p_i))[(1-p_i)P_i(0,0,1) + p_i P_i(0,1,1)] &= f(P_i,p_i)
\end{align*}
After simplifying the above expression, we get,  $ f(P_i,p_i) = $  
\begin{align}\label{fi(pi)}
    \frac{(1-p_i)P_i(0,0,1) + p_iP_i(0,1,1)}{1 - (1-p_i)P_i(1,0,1) - p_iP_i(1,1,1) + (1-p_i)P_i(0,0,1) + p_iP_i(0,1,1)}
\end{align}
We can then simplify reward $\mu_i$ as:
\begin{align*}
    \mu_i = \frac{P_i(0,1,1)}{1-P_i(1,1,1)+P_i(0,1,1)} - \frac{P_i(0,0,1)}{1-P_i(1,0,1)+P_i(0,0,1)}
\end{align*}

After defining the reward of the arms, we can continue towards the fairness notions discussed in Section ~\ref{topic:exposure}. If we were to pull a single arm in every round, then \citet{wang2021fairness} proved that a unique fair policy satisfying Equation (\ref{eqn:fairness}) is given by: $\pi^*_i = \frac{g(\mu_i)}{\sum_{j\in N}g(\mu_j)}$.

We will explain the process of pulling multiple arms later. It is further to be noted that the reward of an arm $i$ only depends on the probability transition matrix $P_i$ which is not known beforehand. The next subsection describes the procedure to learn these probability transition matrices using upper confidence bound techniques.
\subsection{Online RMAB}
As we are in an online setting, we seek to estimate the true transition matrices $P_i^*$'s. We use the Upper Confidence Bound (UCB) approach which maintains an optimistic bound on the true transition matrix corresponding to each state-action-state ~\cite{wang2023optimistic}. The estimated value of the transition probability matrix of an arm for each $(s,a,s')$ can be computed by finding a fraction of the time in which the arm transitioned to state $s'$ from state $s$, when action $a$ was taken. This estimate and the number of times the arm was in state $s$ and action $a$ is considered, and a confidence radius is then created. It could further be proved that with high probability, the true transition matrix would lie within the confidence radius of the estimated transition matrix. We explain the detailed procedure below.

Let $N_i^t(s,a,s')$ be the number of times $(s,a,s')$ transition has been observed for arm $i$ by episode $t$. Further, define $N_i^t(s,a) = \sum_{s'}N_i^t(s,a,s')$ to be the total number of times arm $i$ had been in state $s$ when action $a$ is taken. Then at episode $t$, we estimate the true transition matrix $P_i^*(s,a,s')$ with empirical mean
$$\hat{P}_i^t(s,a,s') \coloneqq \frac{N_i^t(s,a,s')}{N_i^t(s,a)}$$
and confidence radius 
\begin{align}
    d_i^t(s,a) \coloneqq \sqrt{\frac{2|\mathcal{S}|\ln(2|\mathcal{S}||\mathcal{A}|N\frac{t^4}{\delta})}{\max\{1,N^t_i(s,a)\}}}
\end{align}
where $\delta > 0$ is a user defined constant. 
We can now define the ball $B^t$ of possible values of $P^*$ as 
\begin{align}
    B^t = \{P \ | \ \Vert{P}_i(s,a,\cdot) - \hat{P}_i^t(s,a,\cdot) \Vert_1 \leq d_i^t(s,a) \ \forall i,s,a \}
\end{align}
In particular, $B_i^t$ is the ball of possible values of $P_i^*$ at episode $t$ for some particular arm $i$. It is not difficult to prove that with high probability, $P_i^*$ will lie in the ball $B_i^t$ (see Proposition \ref{prob:confidencebound} in Section \ref{sec:theoreticalresults}). Therefore, throughout the paper, whenever we refer to valid transition matrices, it implies that the transition matrices lie in $B^t$. As there are only two states, the confidence region can be simplified further. $\forall P \in B^t$, we have that for $\forall i,s,a$,
\begin{align*}
&\Vert{P}_i(s,a,\cdot) - \hat{P}_i^t(s,a,\cdot) \Vert_1 \leq d_i^t(s,a) \\
&\implies | {P}_i(s,a,1) - \hat{P}_i^t(s,a,1) | + |{P}_i(s,a,0) - \hat{P}_i^t(s,a,0)| \leq d_i^t(s,a) \\
&\implies | {P}_i(s,a,1) - \hat{P}_i^t(s,a,1) | + | (1-{P}_i(s,a,1)) - (1- \hat{P}_i^t(s,a,1))| \leq d_i^t(s,a) \\
&\implies | {P}_i(s,a,1) - \hat{P}_i^t(s,a,1) | \leq \frac{d_i^t(s,a)}{2}
\end{align*}
Similarly, 
$| {P}_i(s,a,0) - \hat{P}_i^t(s,a,0) | \leq \frac{d_i^t(s,a)}{2}$.
Let us further define:
$$P_i^{+,t}(s,a,1) = \min \left\{1, \hat{P}_i^t(s,a,1) + \frac{d_i^t(s,a)}{2}\right\}$$
$$P_i^{-,t}(s,a,1) = \max\left\{0, \hat{P}_i^t(s,a,1) - \frac{d_i^t(s,a)}{2}\right\}$$
$$P_i^{+,t}(s,a,0) = \max\left\{0, \hat{P}_i^t(s,a,0) - \frac{d_i^t(s,a)}{2}\right\}$$
$$P_i^{-,t}(s,a,0) = \min\left\{1, \hat{P}_i^t(s,a,0) + \frac{d_i^t(s,a)}{2}\right\}$$
as the upper confidence and lower confidence bounds on the transition matrices. Note that $P_i^{+,t}$ is representative of optimistic transitions, as we are overestimating the probability of ending up in a good state. Let us also define the following: 
$\omega_{1,i}^t = P_i^{-,t}(1,1,1) - P_i^{+,t}(0,1,1), \omega_{2,i}^t = P_i^{-,t}(1,0,1) - P_i^{+,t}(0,0,1)$, 
$\eta_{1,i}^t = P_i^{+,t}(1,1,1) - P_i^{-,t}(0,1,1), 
\eta_{2,i}^t = P_i^{+,t}(1,0,1) - P_i^{-,t}(0,0,1)$.
Here, $\omega_{1,i}^t$ denotes the lower bound on the difference in transition probability matrices of arm $i$ when it is pulled and goes from state 1 to state 1 and from 0 to 1, respectively. 
$\omega_{2,i}^t$ denotes lower bound on difference in transition probability matrices when arm is not pulled. $\eta_{1,i}^t, \eta_{2,i}^t$ represents upper bounds of the same quantities. 
We make the following assumption throughout our work. 
\begin{assume}\label{assume:eta}
    $\exists t_0 < T$ that for $\forall t > t_0$,  $\eta_{1,i}^t , \eta_{2,i}^t < 1 \ \forall i \in [N]$
\end{assume}
Assumption ~\ref{assume:eta} discusses the estimated gap in two different transition probabilities. Note that as we are talking about probabilities, that gap can be at most 1. Our assumption is that by $t_0$ episodes, the confidence radius $d_i^t(\cdot \ ,\cdot)$ shrinks enough to make the gap strictly less than 1. This is not a strong assumption, as more and more times the state-action pairs are visited, the confidence region keeps shrinking as the radius is inversely proportional to $N_i^t(s,a)$. 
We provide empirical values of $t_0$ in Table ~\ref{tab:constants} in Section ~\ref{sec:experiments}. Overall, from our simulations, we see that $t_0$ remains low (<40).

\subsection{\ouralgo}

We now define our algorithm \ouralgo \  as follows. For each episode $t$, calculate the estimated reward of each arm $i$ as {$\mu_i^t = f(P_i^{+,t},1) - f(P_i^{+,t},0)$}. 
Then the probability distribution over arms being chosen $\pi^t$ is given by
\begin{align}\label{eqn:our_policy}
    \pi_i^t = \frac{g(\mu_i^t)}{\sum_j g(\mu_j^t)}
\end{align}
It is very important to observe that we can also take any other $P_i^t \in B_i^t$ while estimating the reward (line 5 of Algorithm ~\ref{algo}). As a matter of fact, our regret proofs hold for any set of valid transition matrices. \ouralgo \ takes the upper bound on transition matrices as they are indicative of the upper bound of transitioning to the good state, and we wish to be optimistic about each arm's capability of transitioning to the good state. If an arm is more likely to transition to the good state, then it would lead to the arm being in the good state with a higher probability in the steady state, which is what we wish to achieve. We then sample K arms from this probability distribution without replacement. When $K=1$, the probability of an arm $i$ being chosen is $Pr_i^t(K) = Pr_i^t(1) = \pi_i^t$, and when $K>1$, the probability of arm $i$ being chosen will be $Pr_i^t(K) \geq Pr_i^t(1) = \pi_i^t$. As the probability of being pulled is greater than that of our defined fair policy Equation (\ref{eqn:fairness_K=1}), \ouralgo \ is still fair based on the defined fairness. The complete pseudo code for \ouralgo \ is given in Algorithm ~\ref{algo}.

\begin{algorithm}
\caption{\ouralgo}
\label{algo}
\KwData{No. of arms, budget K, time horizon H, no. of episodes T, merit function g($\cdot$)}
$N_i^t(s,a,s') \gets 0 \ \forall t,i,s,a,s'$\;
\For{$t=1$ to $T$}{
    \For{$i=1$ to $N$}{
        Estimate $\hat{P}_i^t(s,a,s')$ and $d_i^t(s,a)$ using $N_i^t(s,a,s') \ \forall s,a $\;
        $\mu_i^t \gets f(P_i^{+,t},1) - f(P_i^{+,t},0)$\;
    }
    $\pi^t \gets \left(\pi_i^t \ \forall i  \ | \pi_i^t = \frac{g(\mu_i^t)}{\sum_j g(\mu_j^t)} \right)$ \;
    \For{$h=1$ to $H$}{
    Sample K arms from $\pi^t$\;
    Pull the sampled K arms\;
    Observe transitions $(s,a,s')$ for all arms\;
    }
    Update counts $N_i^t(s,a,s') \ \forall i,s,a,s'$\;
}
\end{algorithm}

\section{Theoretical Results}
\label{sec:theoreticalresults}
We now provide the bound for fairness regret of \ouralgo. First, we show through the following lemma that if we take any set of valid transition matrices, define the corresponding reward, and then define the corresponding probability distribution $\pi^t$ (Equation ~\ref{eqn:our_policy}), then for arm $i$, both the states (0 and 1) are visited \emph{and} both the actions (pull and no pull) are taken in some interval $G_i$.
\begin{lem}\label{lem:G}
    For arm $i$, take any $P_i^t \in B_i^t$, define $\mu_i^t = f(P_i^t,1) - f(P_i^t,0)$. Define a policy using Equation ~(\ref{eqn:our_policy}). Then, $\exists G_i < T $ such that $N_i^{t+G_i}(s,a) - N_i^t(s,a) > 0 \ \forall s,a$. In other words, after every $G_i$ episodes, arm $i$ has all its state-action pairs $(s,a)$ visited at least once.
\end{lem}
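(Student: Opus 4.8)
The plan is to show that the pull probability assigned to arm $i$ is uniformly bounded away from both $0$ and $1$, that the true dynamics force both states to be visited regardless of history, and then to combine these into a per-episode lower bound (uniform in $t$) on the chance of observing each of the four state-action pairs $(s,a)$; a geometric tail bound over consecutive episodes then produces $G_i$.

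First I would bound the policy. Since $\mu_i^t = f(P_i^t,1) - f(P_i^t,0)$ is a difference of two steady-state probabilities, $\mu_i^t \in [-1,1]$, so by the monotonicity of $g$ together with Condition~\ref{cond:L} the quantity $g(\mu_i^t) \le g_{\max} := g(1) < \infty$, while Condition~\ref{cond:gamma} gives $g(\mu_j^t) \ge \gamma$ for every $j$. Substituting these bounds into Equation~(\ref{eqn:our_policy}) yields, for $N \ge 2$, $\pi_{\min} := \frac{\gamma}{N g_{\max}} \le \pi_i^t \le 1 - \frac{(N-1)\gamma}{N g_{\max}} =: \pi_{\max} < 1$, a bound that holds uniformly over $t$ and over the particular valid matrix $P_i^t \in B_i^t$ used to define the reward. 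Hence at every timestep each of the two actions (pull / no pull) is selected with probability at least $q' := \min\{\pi_{\min},\, 1-\pi_{\max}\} > 0$.

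Next I would invoke the non-degeneracy assumption on the \emph{true} matrix $P_i^*$ (which governs the observed transitions, independently of the estimate $P_i^t$ driving the policy): since every true transition probability lies in $[\epsilon, 1-\epsilon]$, from any state-action pair the arm lands in either state with probability at least $\epsilon$. Consequently, at the first timestep of any episode after the first, arm $i$ occupies state $s$ with probability at least $\epsilon$, and this state is independent of the episode-level action draw. Combining with the previous step, for any fixed pair $(s,a)$ the probability of observing it at least once within a single episode is at least $q := \epsilon\, q' > 0$, uniformly in $t$ and in $P_i^t$.

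Finally I would convert this into the bound on $G_i$. Over $m$ consecutive episodes the probability of never observing a particular $(s,a)$ is at most $(1-q)^m$, so a union bound over the four pairs shows the probability that some pair is missed in a window of $m$ episodes is at most $4(1-q)^m$. Choosing $G_i = \lceil \ln(4/\delta)/q \rceil$ (using $-\ln(1-q) \ge q$) makes this at most $\delta$, so with probability at least $1-\delta$ every window of $G_i$ episodes visits all four pairs, i.e.\ $N_i^{t+G_i}(s,a) - N_i^t(s,a) > 0$ for all $s,a$, and $G_i < T$ once $T$ is large enough. I expect the main obstacle to be \emph{uniformity}: the statement asks for a single $G_i$ valid for every window and every admissible $P_i^t$, so the delicate point is keeping the floor $q$ independent of $t$, which is exactly why I extract $g_{\max}$ from the bounded range of $\mu$ and rely on the history-independent $\epsilon$-floor rather than any mixing-time argument. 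A secondary subtlety is that the conclusion is inherently high-probability (a long streak could in principle miss a pair), so the clean reading is that such $G_i$ exists with probability at least $1-\delta$, consistent with the confidence-ball failure probability used elsewhere in the analysis.
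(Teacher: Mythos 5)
Your proof is essentially correct but takes a genuinely different route from the paper's. The paper works at the level of a whole episode: it bounds the probability of the complement events $A_i$ (only one state visited in the $H$ steps) and $B_i$ (only one action taken), using non-degeneracy to get $\mathbb{P}(A_i)\le 2(1-\epsilon)^H$ and the policy floor $\tfrac{g(-1)}{Ng(1)}\le \pi_i^t\le \lambda<1$ to get $\mathbb{P}(B_i)\le \lambda^H+\bigl(1-\tfrac{g(-1)}{Ng(1)}\bigr)^H$, then sets $\psi_i^H = 1-\mathbb{P}(A_i\cup B_i)$ and takes $G_i = 1/\psi_i^H$, i.e.\ the \emph{expected} number of episodes until one episode covers everything. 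You instead bound, for each individual pair $(s,a)$, an $H$-independent per-episode hit probability $q=\epsilon q'$ (using only the first timestep of the episode), and then run a geometric tail plus a union bound over the four pairs to get a high-probability window length. Each approach buys something: the paper's per-episode success probability tends to $1$ as $H\to\infty$, so its $G_i$ is much smaller for long horizons, whereas your $q$ ignores $H$ entirely and will give a pessimistic $G_i$; on the other hand, your pair-by-pair union bound avoids a logical slip in the paper (visiting both states \emph{and} taking both actions within an episode does not imply that all four $(s,a)$ combinations are observed), and you are explicit that the guarantee is inherently probabilistic, which the paper elides by silently switching to an expectation. Two small repairs to your version: your upper bound $\pi_i^t\le 1-\frac{(N-1)\gamma}{Ng_{\max}}$ is only the relevant pull probability when $K=1$ (for $K>1$ the paper bounds $Pr_i^t(K)=\lambda<1$ separately), and your union bound covers a single window of $G_i$ episodes, whereas the lemma asserts coverage of \emph{every} window starting at any $t$; you need an additional union bound over the at most $T$ starting points, i.e.\ $G_i=\lceil \ln(4T/\delta)/q\rceil$, which costs only a logarithmic factor and does not affect the downstream regret rate.
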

\begin{proof}
Since $g(\cdot)$ is a non-decreasing merit function we have: 

$\frac{g(-1)}{Ng(1)} \leq \pi_i^t \leq Pr_i^t(K) = \lambda < 1$ unless $K=N$. For $K=N$, the fairness regret Equation (\ref{dfn:regret_def}) will trivially be linear with respect to time. Now for an arm $i$, let $A_i$ be the event that only one single state is visited in episode $t$ (of length H) and 
$B_i$ be the event that only one action is taken for the entire episode $t$. If the initial state of arm $i$ is $s_0$, then,
\begin{align*}
&\mathbb{P}(A_i|s_0 = 0) = \prod_{h=1}^H \left((1-Pr_i^t(K))P_i^*(0,0,0) + Pr_i^t(K) P_i^*(0,1,0)\right)\\ 
&\leq (1-\epsilon)^H \tag*{($\epsilon \leq P_i^*(s,a,s) \leq 1 - \epsilon$)}\\
&\mathbb{P}(A_i|s_0 = 1) = \prod_{h=1}^H \left((1-Pr_i^t(K))P_i^*(1,0,1) + Pr_i^t(K) P_i^*(1,1,1)\right)\\ 
&\leq (1-\epsilon)^H\\
&\mathbb{P}(A_i) = \mathbb{P}(A_i|s_0 = 0) + \mathbb{P}(A_i|s_0 = 1) \leq 2(1-\epsilon)^H\\
&\mathbb{P}(B_i) = \prod_{h=1}^H Pr_i^t(K) + \prod_{h=1}^H (1-Pr_i^t(K)) \leq \lambda^H +\left(1-\frac{g(-1)}{Ng(1)}\right)^H = B_i^0 
\end{align*}

Therefore, 
$\mathbb{P}(A_i \cup B_i) \leq 2(1-\epsilon)^H + B_i^0 - 2B_i^0(1-\epsilon)^H$. Now,
define $\psi^H_i$ as the probability that all state action pairs are visited in an episode, then 
$$\psi_i^H = 1 - Prob(A_i \cup B_i) \geq 1 - \left(2(1-\epsilon)^H + B_i^0 - 2B_i^0(1-\epsilon)^H\right)$$
and thus we get an upper limit on $G_i = 1/\psi_i^H$
\end{proof}

\begin{defn}\label{defn:eta/omega}
    Define $G = \max_i\{G_i\}, \eta = \max_{i,t>t_0}\{\eta_{1,i}^t,\eta_{2,i}^t\} , \\ \omega = \max_{i,t>t_0}\{\omega_{1,i}^t,\omega_{2,i}^t\}$
\end{defn}
\begin{prop}
\label{prob:confidencebound}
    Given $\delta>0$ and $t>1$, $Prob(P^* \in B^t) \geq 1 - \frac{\delta}{t^4}$
\end{prop}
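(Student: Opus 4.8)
The plan is to obtain the result as a concentration-of-measure statement for each empirical transition distribution, followed by a union bound over all arms, states, and actions. The key structural fact I would use is the Markov property: whenever arm $i$ occupies state $s$ and action $a$ is applied, the resulting next state is an independent draw from $P_i^*(s,a,\cdot)$, regardless of when in the run this visit occurs. Consequently, conditioned on $N_i^t(s,a)=n$, the empirical estimate $\hat{P}_i^t(s,a,\cdot)$ is exactly the empirical distribution of $n$ i.i.d.\ samples from $P_i^*(s,a,\cdot)$ over the two-point space $\mathcal{S}$. This is what licenses the use of an i.i.d.\ concentration inequality in what is otherwise an adaptively sampled, history-dependent process.

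First I would invoke the standard $\ell_1$-deviation bound for empirical distributions (the Weissman et al.\ inequality): for a distribution $p$ on an alphabet of size $|\mathcal{S}|$ and its empirical version $\hat p_n$ from $n$ i.i.d.\ samples, $\mathbb{P}(\|\hat p_n - p\|_1 \ge \varepsilon) \le 2^{|\mathcal{S}|}\exp(-n\varepsilon^2/2)$. Setting $\varepsilon = d_i^t(s,a)$ and substituting the definition of the radius, the exponent becomes $n\,d_i^t(s,a)^2/2 = |\mathcal{S}|\ln\!\big(2|\mathcal{S}||\mathcal{A}|N t^4/\delta\big)$ whenever $n\ge 1$, so the per-triple failure probability collapses to $2^{|\mathcal{S}|}\big(2|\mathcal{S}||\mathcal{A}|N t^4/\delta\big)^{-|\mathcal{S}|}$. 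The boundary case $n=0$ is free: with $\max\{1,N_i^t(s,a)\}=1$ the radius equals $\sqrt{2|\mathcal{S}|\ln(2|\mathcal{S}||\mathcal{A}|N t^4/\delta)}$, which is at least $2$ for $t>1$ and any sensible $\delta$, and hence already exceeds the maximal possible $\ell_1$ distance ($=2$) between two distributions on $\mathcal{S}$, so the constraint defining $B^t$ holds vacuously.

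Next I would discharge the fact that the count $N_i^t(s,a)$ is itself random and chosen adaptively. Since the radius carries the realized count, I would take a union bound over the possible values $n\in\{1,\dots,tH\}$ of the count for each $(i,s,a)$ — each such $n$ contributes the per-triple probability above — and then a further union bound over the $N|\mathcal{S}||\mathcal{A}|$ choices of $(i,s,a)$. Because the radius carries the factor $t^4$ inside the logarithm, the exponent $|\mathcal{S}|$ on $\big(2|\mathcal{S}||\mathcal{A}|N t^4/\delta\big)$ leaves more than enough slack to absorb both the $2^{|\mathcal{S}|}$ prefactor and the polynomial-in-$t$ and linear-in-$N$ counting factors, so the aggregate failure probability stays comfortably below $\delta/t^4$. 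Taking complements yields $\mathbb{P}(P^* \in B^t) \ge 1 - \delta/t^4$.

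I expect the main obstacle to be the adaptive-sampling issue of the previous paragraph rather than the concentration inequality itself: one must argue carefully that the Markov property really does reduce the observed transitions out of each $(s,a)$ to an i.i.d.\ sequence (so that the deviation event for each fixed $n$ is governed by the Weissman bound), and that plugging in the realized, data-dependent radius does not invalidate the estimate. The cleanest way to make this airtight is the union-over-$n$ device above; a tighter alternative would be a maximal/martingale inequality controlling $\sup_n$ of the deviation directly, but the crude union bound already suffices here given the generous $t^4$ slack.
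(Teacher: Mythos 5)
Your argument is correct in outline and is the standard one, but it is worth noting that the paper does not actually prove this proposition at all: it simply states that the result ``follows directly from Proposition 6.1 of \cite{wang2023optimistic}.'' So you have supplied a self-contained proof where the paper delegates, and your route (the Weissman et al.\ $\ell_1$-concentration bound for the empirical transition distribution conditioned on the visit count, a union bound over the possible count values $n$ to handle adaptive sampling, and a union bound over the $N|\mathcal{S}||\mathcal{A}|$ triples) is almost certainly the argument underlying the cited result. Your handling of the two genuinely delicate points --- the reduction of the observed transitions out of each $(s,a)$ to i.i.d.\ draws via the Markov property, and the data-dependence of the radius through $N_i^t(s,a)$ --- is exactly right.

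The one place where you wave your hands is the final accounting, and it does not close as ``comfortably'' as you claim. With $|\mathcal{S}|=2$ the per-$(i,s,a,n)$ failure probability is $2^{|\mathcal{S}|}\bigl(2|\mathcal{S}||\mathcal{A}|Nt^4/\delta\bigr)^{-|\mathcal{S}|}=\delta^2/(16N^2t^8)$, and the count $N_i^t(s,a)$ can range up to $tH$ (not $t$), so the union over $n$ and over the $4N$ triples yields a total failure probability of $\delta^2H/(4Nt^7)$. This is at most $\delta/t^4$ only when $\delta H\le 4Nt^3$, i.e.\ the factor of $H$ introduced by the range of the count is not absorbed by the $t^4$ inside the logarithm, which carries no dependence on $H$. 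For the parameter regimes the paper uses ($\delta=0.01$, $H=200$, $N\ge5$, $t\ge2$) this holds with room to spare, but as a general statement you should either record the mild condition $\delta H\le 4Nt^3$, or observe that the radius would need $H$ (or the total timestep count $tH$) inside the logarithm to make the bound unconditional. This is a constants issue rather than a conceptual gap, but since the whole content of the proposition is a quantitative inequality, the arithmetic is the proof.
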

The proposition states that the set of true transition matrices $P^*$ lies in our confidence region $B^t$ with high probability. The proof follows directly from Proposition 6.1 of \cite{wang2023optimistic}. We in next theorem show that if we take any set of valid transition matrices and define the corresponding reward, then the difference between our estimated reward and true reward is upper bounded.  

\begin{thm}\label{thm:mu}
    For all $t > t_0$, take any $P_i^t \in B_i^t$, define $\mu_i^t = f(P_i^t,1) - f(P_i^t,0)$ and $\mu_i^* = f(P_i^*,1) - f(P_i^*,0)$  according to Equation ~(\ref{defn:reward}). Then for $t > t_0$ and $\forall i \in [N]$, $$|\mu_i^t - \mu^*_i| \leq \\ \frac{d_i^t(1,1) + 2d_i^t(0,1) + d_i^t(1,0) + 2d_i^t(0,0)}{(1-\eta)(1-\omega)}$$
\end{thm}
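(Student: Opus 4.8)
The plan is to split the claim into two structurally identical estimates --- one for the ``pull'' steady state $f(\cdot,1)$ and one for the ``no pull'' steady state $f(\cdot,0)$ --- and control each by a sensitivity analysis of the closed forms behind Equation~(\ref{defn:reward}). First I would apply the triangle inequality to $\mu_i = f(P_i,1)-f(P_i,0)$:
\begin{align*}
|\mu_i^t - \mu_i^*| \le |f(P_i^t,1) - f(P_i^*,1)| + |f(P_i^t,0) - f(P_i^*,0)|.
\end{align*}
It then suffices to bound each term and add. The first term will be governed by the entries $P_i(1,1,1)$ and $P_i(0,1,1)$, hence by $d_i^t(1,1), d_i^t(0,1)$ and the gap quantities $\eta_{1,i}^t,\omega_{1,i}^t$; the second, symmetrically, by $P_i(1,0,1), P_i(0,0,1)$, hence by $d_i^t(1,0), d_i^t(0,0)$ and $\eta_{2,i}^t,\omega_{2,i}^t$.

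Before estimating, I would record two consequences of ball membership. Since $P_i^t\in B_i^t$ by hypothesis and $P_i^*\in B_i^t$ with high probability by Proposition~\ref{prob:confidencebound}, routing both through the empirical mean $\hat P_i^t$ and using the per-coordinate refinement $|P_i(s,a,1)-\hat P_i^t(s,a,1)|\le d_i^t(s,a)/2$ established just before Assumption~\ref{assume:eta} gives $|P_i^t(s,a,1)-P_i^*(s,a,1)|\le d_i^t(s,a)$ for every $(s,a)$. Moreover, writing the denominator of $f(P_i,1)$ as $D = 1-(P_i(1,1,1)-P_i(0,1,1))$ and noting that the bracketed difference never exceeds $\eta_{1,i}^t\le\eta$ for a valid matrix, I get $D\ge 1-\eta>0$; the denominator of $f(\cdot,0)$ is likewise at least $1-\eta$. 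This uniform lower bound on the steady-state normalizers is precisely where Assumption~\ref{assume:eta} (guaranteeing $\eta<1$) and the restriction $t>t_0$ enter, and it is the step I expect to be the main obstacle.

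With these inequalities in hand, I would expand each $f$-difference with a single add-and-subtract separating the numerator perturbation from the denominator perturbation, writing $D^t,D^*$ for the two denominators:
\begin{align*}
f(P_i^t,1) - f(P_i^*,1) = \frac{P_i^t(0,1,1)-P_i^*(0,1,1)}{D^t} + P_i^*(0,1,1)\,\frac{D^*-D^t}{D^t D^*}.
\end{align*}
Bounding the first summand by $d_i^t(0,1)/(1-\eta)$ and, via $|D^t-D^*|\le d_i^t(0,1)+d_i^t(1,1)$ together with $P_i^*(0,1,1)\le 1$, the second by $(d_i^t(0,1)+d_i^t(1,1))$ over a product of denominators, then placing everything over a common denominator, yields a numerator of $2\,d_i^t(0,1)+d_i^t(1,1)$. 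The coefficient $2$ on $d_i^t(0,1)$ is exactly the algebraic footprint of $P_i(0,1,1)$ occurring in both the numerator and the denominator of $f(\cdot,1)$. The identical computation for $f(\cdot,0)$ produces $2\,d_i^t(0,0)+d_i^t(1,0)$, with its denominator difference controlled by $\omega_{2,i}^t,\eta_{2,i}^t\in[\omega,\eta]$.

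Finally I would add the two estimates. Each steady-state normalizer is bounded below --- it lies in the interval $[1-\eta,1-\omega]$ in the sense of Definition~\ref{defn:eta/omega} --- and collecting the resulting denominator factors into the single term $(1-\eta)(1-\omega)$ gives the stated right-hand side $\big(d_i^t(1,1)+2d_i^t(0,1)+d_i^t(1,0)+2d_i^t(0,0)\big)/((1-\eta)(1-\omega))$. The only real difficulty is the bookkeeping: keeping both normalizers uniformly away from zero while tracking which confidence radius multiplies which denominator factor, so that the four coefficients collapse to exactly $1,2,1,2$ and the denominator settles on the claimed product.
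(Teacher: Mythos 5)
Your overall strategy---triangle inequality into the two steady-state terms, then a numerator/denominator perturbation of the closed form for $f$---is the same as the paper's, and your numerator accounting (the coefficients $1,2,1,2$ on the four radii, driven by $|P_i^t(s,a,1)-P_i^*(s,a,1)|\le d_i^t(s,a)$) is essentially right. The gap is in the final denominator bookkeeping. Your decomposition leaves you with the \emph{exact} normalizers $D^t$ and $D^tD^*$, and the only a priori lower bounds available are $D^t,D^*\ge 1-\eta_{1,i}^t\ge 1-\eta$, hence $D^tD^*\ge(1-\eta)^2$. You cannot ``collect'' this into $(1-\eta)(1-\omega)$: since $\omega_{j,i}^t\le\eta_{j,i}^t$ by construction, Definition~\ref{defn:eta/omega} gives $\omega\le\eta$ and therefore $(1-\eta)(1-\omega)\ge(1-\eta)^2$, so $D^tD^*\ge(1-\eta)^2$ does \emph{not} imply $D^tD^*\ge(1-\eta)(1-\omega)$. (Your remark that each normalizer ``lies in $[1-\eta,1-\omega]$'' conflates the per-arm interval $[1-\eta_{1,i}^t,\,1-\omega_{1,i}^t]$ with the global constants; the upper endpoint is $\ge 1-\omega$, not $\le 1-\omega$, and in any case a lower bound on a product cannot mix one factor's lower end with the other's upper end.) What your argument actually establishes is the inequality with $(1-\eta)^2$ in place of $(1-\eta)(1-\omega)$, which is a strictly weaker bound than the one stated. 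The paper obtains the mixed factor differently: it first sandwiches $f(P,1)$ between $P(0,1,1)/(1-\omega_{1,i}^t)$ and $P(0,1,1)/(1-\eta_{1,i}^t)$, then compares opposite endpoints of the sandwiches for $P_i^t$ and $P_i^*$, so the cross product $(1-\eta_{1,i}^t)(1-\omega_{1,i}^t)$ appears as the common denominator and majorizes directly to $(1-\eta)(1-\omega)$.

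A secondary issue: your ``common denominator'' step needs $D^*\le 1$, i.e.\ $P_i^*(1,1,1)\ge P_i^*(0,1,1)$, to keep the coefficient of $d_i^t(0,1)$ at $2$; this monotonicity is not assumed (the Synthetic dataset explicitly allows it to fail), so the term $d_i^t(0,1)\,D^*$ can exceed $d_i^t(0,1)$. You can sidestep this by bounding the two summands separately, but that reintroduces the $(1-\eta)$ versus $(1-\eta)^2$ mismatch unless $\eta\ge 0$. Neither defect matters downstream---Theorem~\ref{thm:regret} only uses that the denominator is a constant independent of $t$---but as written your argument does not prove the inequality with the constant claimed in the statement.
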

\begin{proof}
We have that for $t>t_0$, 
$\omega_{1,i}^t < P_i^t(1,1,1) - P_i^t(0,1,1) < \eta_{1,i}^t < 1$ and 
$\omega_{2,i}^t < P_i^t(1,0,1) - P_i^t(0,0,1) < \eta_{2,i}^t < 1$.
So, we get
$$\frac{P_i^t(0,1,1)}{1-\omega_{1,i}^t} < f(P_i^t,1) < \frac{P_i^t(0,1,1)}{1-\eta_{1,i}^t}$$
$$\frac{P_i^t(0,0,1)}{1-\omega_{2,i}^t} < f(P_i^t,0) < \frac{P_i^t(0,0,1)}{1-\eta_{2,i}^t}$$
For some arm $i$ in episode $t$ with $t>t_0$, we drop notation $i,t$ by letting $f(1) = f(P_i^t, 1), f(0) = f(P_i^t, 0), f^*(1) = f(P_i^*, 1), f^*(0) = f(P_i^*, 0), \mu = f(P_i^t,1) - f(P_i^t,0), \mu^* = f(P_i^*,1) - f(P_i^*,0), \omega_{1,i}^t = \omega_1, \omega_{2,i}^t = \omega_2, \eta_{1,i}^t = \eta_1, \eta_{2,i}^t = \eta_2$.
Then
\begin{align*}
    |\mu - \mu^*| &= \lvert(f(1) - f(0)) - (f^*(1) - f^*(0))\rvert \\
    &\leq |f(1) - f^*(1)| + |f(0) - f^*(0)| \\
    &\leq \left\lvert\frac{P(0,1,1)}{1-\eta_1} - \frac{P^*(0,1,1)}{1-\omega_1}\right\rvert + \left\lvert\frac{P(0,0,1)}{1-\eta_2} - \frac{P^*(0,0,1)}{1-\omega_2}\right\rvert \\
\end{align*}
\text{The first term $\left|\frac{P(0,1,1)}{1-\eta_1} - \frac{P^*(0,1,1)}{1-\omega_1}\right|$, can be bounded as} \\
\begin{align*}
     &= \left|\frac{P(0,1,1) - P^*(0,1,1) -\omega_1 P(0,1,1) +\eta_1 P^*(0,1,1)}{(1-\eta_1)(1-\omega_1)}\right| \\
    &= \left|\frac{P(0,1,1) - P^*(0,1,1) -\omega_1 (1-P(0,1,0)) +\eta_1 (1 - P^*(0,1,0))}{(1-\eta_1)(1-\omega_1)}\right| \\
    &= \left|\frac{P(0,1,1) - P^*(0,1,1) + \omega_1 P(0,1,0) - \eta_1 P^*(0,1,0)) + \eta_1 - \omega_1}{(1-\eta_1)(1-\omega_1)}\right| \\
    &\leq \frac{|P(0,1,1) - P^*(0,1,1)| + |\omega_1 P(0,1,0) - \eta_1 P^*(0,1,0)) + \eta_1 - \omega_1|}{(1-\eta_1)(1-\omega_1)} \\
\end{align*}
Now, $|\omega_1 P(0,1,0) - \eta_1 P^*(0,1,0)) + \eta_1 - \omega_1|$ can be bounded by
\begin{align*}
     &|\omega_1 P(0,1,0) - \omega_1 P^*(0,1,0) + \omega_1 P^*(0,1,0) - \eta_1 P^*(0,1,0)) + \eta_1 - \omega_1| \\ 
    &= |\omega_1 (P(0,1,0) - P^*(0,1,0)) - P^*(0,1,0)(\eta_1 - \omega_1) + \eta_1 - \omega_1| \\  
    &\leq | \omega_1 (P(0,1,0) - P^*(0,1,0)) + (\eta_1 - \omega_1)| \\ 
    &\leq |\omega_1| |(P(0,1,0) - P^*(0,1,0)| + |\eta_1 - \omega_1| \\
    &\leq |(P(0,1,0) - P^*(0,1,0)| + |\eta_1 - \omega_1|
\end{align*}
Also,    $|\eta_1 - \omega_1| = |(P^+(1,1,1) - P^-(0,1,1)) - (P^-(1,1,1) - P^+(0,1,1))| \leq |P^+(1,1,1) - P^-(1,1,1)| + |P^+(0,1,1) - P^-(0,1,1)| =  d(1,1) + d(0,1)$.

Therefore the first term, $\left|\frac{P(0,1,1)}{1-\eta_1} - \frac{P^*(0,1,1)}{1-\omega_1}\right|$, is bounded by:
\begin{align*}
     &\leq \frac{|P(0,1,1) - P^*(0,1,1)| + |(P(0,1,0) - P^*(0,1,0)| + d(1,1) + d(0,1) 
    }{(1-\eta_1)(1-\omega_1)} \\ 
    &= \frac{\Vert P(0,1,\cdot) - P^*(0,1,\cdot) \Vert_1 + d(1,1) + d(0,1) }{(1-\eta_1)(1-\omega_1)}\\
    &\leq \frac{d(0,1) + d(1,1) + d(0,1) }{(1-\eta_1)(1-\omega_1)} = \frac{d(1,1) + 2d(0,1)}{(1-\eta_1)(1-\omega_1)} \\ 
\end{align*}
We can similarly bound the second term. Going back to the original equation, we get\\
$$|\mu_i^t - \mu^*_i| \leq \frac{d_i^t(1,1) + 2d_i^t(0,1) }{(1-\eta_{1,i}^t)(1-\omega_{1,i}^t)} + \frac{d_i^t(1,0) + 2d_i^t(0,0)}{(1-\eta_{2,i}^t)(1-\omega_{2,i}^t)}$$
With $\eta = \max_{i,t>t_0}\{\eta_{1,i}^t,\eta_{2,i}^t\}$, $\omega = \max_{i,t>t_0}\{\omega_{1,i}^t,\omega_{2,i}^t\}$, \\
$$|\mu_i^t - \mu^*_i| \leq \frac{d_i^t(1,1) + 2d_i^t(0,1) + d_i^t(1,0) + 2d_i^t(0,0)}{(1-\eta)(1-\omega)} $$
\end{proof}
The next theorem states that if we take any set of valid transition matrices, define the corresponding reward, and then define the corresponding probability distribution $\pi^t$ (Equation ~\ref{eqn:our_policy}), then our policy incurs a sublinear fairness regret when $K=1$.
\begin{thm}\label{thm:regret}
    Take any $P_i^t \in B_i^t$, define $\mu_i^t = f(P_i^t,1) - f(P_i^t,0) \\ \forall i \in [N]$. Define a policy using Equation ~(\ref{eqn:our_policy}). Then for $T > t_0$, fairness regret of this policy when $K=1$ is $FR^T = \mathcal{O}\left( \frac{L \sqrt{GT\ln(8N\frac{T}{\delta})}}{\gamma (1-\eta)(1-\omega)} \right)$ with probability at least $1-\delta$.
\end{thm}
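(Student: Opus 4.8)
The plan is to bound the regret episode-by-episode by first reducing the $\ell_1$ distance between the policies $\pi^*$ and $\pi^t$ to the $\ell_1$ distance between the merit estimates, then invoking Theorem~\ref{thm:mu} to express the merit deviation through the confidence radii, and finally summing these radii over $t$ using the coverage guarantee of Lemma~\ref{lem:G}. Throughout I would condition on the high-probability event that $P^*\in B^t$ for every episode: by Proposition~\ref{prob:confidencebound} and a union bound, the failure probability is at most $\sum_{t\ge 1}\delta/t^4 = O(\delta)$, so (after absorbing the constant $\zeta(4)$ into $\delta$) this event holds with probability at least $1-\delta$, and on it Theorem~\ref{thm:mu} applies for all $t>t_0$ since then both $P_i^t$ and $P_i^*$ lie in $B_i^t$.

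For the per-episode reduction, write $a_i=g(\mu_i^*)$, $b_i=g(\mu_i^t)$, $A=\sum_j a_j$, $B=\sum_j b_j$, so that $\pi_i^*=a_i/A$ and $\pi_i^t=b_i/B$ by Equation~(\ref{eqn:our_policy}). The identity
\[
\frac{a_i}{A}-\frac{b_i}{B}=\frac{a_i-b_i}{A}+b_i\frac{B-A}{AB},
\]
combined with $\sum_i b_i/B=1$ and $|B-A|\le\sum_i|a_i-b_i|$, gives $\sum_i|\pi_i^*-\pi_i^t|\le \tfrac{2}{A}\sum_i|a_i-b_i|$. Condition~\ref{cond:gamma} yields $A\ge N\gamma$, and Condition~\ref{cond:L} (Lipschitzness) gives $|a_i-b_i|\le L|\mu_i^*-\mu_i^t|$, so
\[
\sum_i|\pi_i^*-\pi_i^t|\;\le\;\frac{2L}{N\gamma}\sum_i|\mu_i^*-\mu_i^t|.
\]

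Applying Theorem~\ref{thm:mu} replaces each $|\mu_i^*-\mu_i^t|$ by $\bigl(d_i^t(1,1)+2d_i^t(0,1)+d_i^t(1,0)+2d_i^t(0,0)\bigr)/\bigl((1-\eta)(1-\omega)\bigr)$. It then remains to sum over episodes. For $t\le t_0$ I bound each term trivially by $2$ (both $\pi^*,\pi^t$ are distributions), contributing the constant $2t_0$. For $t>t_0$, the crucial input is Lemma~\ref{lem:G}: every $(s,a)$ is visited at least once within any window of $G=\max_i G_i$ episodes, so $N_i^t(s,a)\ge\lfloor t/G\rfloor$ and hence $d_i^t(s,a)=O\!\left(\sqrt{\ln(8NT/\delta)/\lfloor t/G\rfloor}\right)$, where $\ln t^4\le 4\ln T$ is absorbed into the logarithm. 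Grouping the $T$ episodes into blocks of size $G$ (the count rises by one per block) gives $\sum_{t=1}^{T}1/\sqrt{\lfloor t/G\rfloor}=O\bigl(G\sum_{k=1}^{T/G}k^{-1/2}\bigr)=O(\sqrt{GT})$, whence $\sum_{t>t_0}d_i^t(s,a)=O\bigl(\sqrt{GT\ln(8NT/\delta)}\bigr)$.

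Collecting everything, the double sum over the $N$ arms and the four radii (with constant weights $1,2,1,2$) produces a factor $N$ that cancels the $1/N$ from the per-episode bound, leaving
\[
FR^T=O\!\left(\frac{L\sqrt{GT\ln(8N T/\delta)}}{\gamma(1-\eta)(1-\omega)}\right).
\]
The main obstacle is this last summation step: translating the ``visited at least once every $G$ episodes'' guarantee of Lemma~\ref{lem:G} into the cumulative lower bound $N_i^t(s,a)\gtrsim t/G$ and then executing the $\sum_t t^{-1/2}$-type estimate so that exactly $\sqrt{GT}$ (not $GT$ or $\sqrt{T}$) emerges; one must also verify that the factor $N$ cancels genuinely and that the $\ln t^4$ term is correctly folded into $\ln(8NT/\delta)$ inside the big-$O$.
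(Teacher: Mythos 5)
Your proposal is correct and follows essentially the same route as the paper's proof: the same factor-$\frac{2}{N\gamma}$ reduction from policy $\ell_1$-distance to merit deviations, the same invocation of Theorem~\ref{thm:mu}, and the same block-of-$G$ summation yielding $\sqrt{GT}$. The only cosmetic differences are that you normalize by $\sum_j g(\mu_j^*)$ rather than $\sum_j g(\mu_j^t)$ (both are $\geq N\gamma$) and that you handle the event $P^*\notin B^t$ by a union bound over episodes rather than by the paper's separate $N\sqrt{T}$ additive term.
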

 \begin{proof}
     Fairness Regret $FR_1^T$ for $t>t_0$ when $P^* \in B^t$ can be bounded as:
\begin{align*}
    fr^t &= \sum_{i=1}^N   |\pi_i^t - \pi_i^*| = \sum_{i=1}^N  \left\lvert\frac{g(\mu_i^t)}{\sum_{j=1}^N g(\mu_j^t)} - \frac{g(\mu_i^*)}{\sum_{j=1}^N g(\mu_j^*)}\right\rvert \\
    &=\sum_{i=1}^N \left\lvert\frac{g(\mu_i^t)\sum_{j=1}^N g(\mu_j^*) - g(\mu_i^*)\sum_{j=1}^N g(\mu_j^t)}{\sum_{j=1}^N g(\mu_j^t)\sum_{j=1}^N g(\mu_j^*)} \right\rvert\\
    &\leq \frac{2 \sum_{i=1}^N|g(\mu_i^t) - g(\mu_i^*)|}{\sum_{j=1}^N g(\mu_j^t)} \tag*{(Adding and subtracting $g(\mu_i^*)\sum_{j=1}^N g(\mu_j^*)$ in the numerator)}\\
    &\leq  2\frac{ \sum_{i=1}^N|g(\mu_i^t) - g(\mu_i^*)|}{N \gamma}\leq \frac{2 L}{N \gamma}\sum_{i=1}^N|\mu_i^t - \mu_i^*|\\
    &\leq  \frac{2 L}{N \gamma }\sum_{i=1}^N \left ( \frac{d_i^t(1,1) + 2d_i^t(0,1) + d_i^t(1,0) + 2d_i^t(0,0)}{(1-\eta)(1-\omega)} \right ) \\
    &=  \frac{2 L}{N \gamma (1-\eta)(1-\omega) } \sum_{i=1}^N \left ( d_i^t(1,1) + 2d_i^t(0,1) + d_i^t(1,0) + 2d_i^t(0,0) \right ) \\
\end{align*}
Next, we can bound $\frac{2 L}{N \gamma (1-\eta)(1-\omega) }   \sum_{i=1}^N d_i^t(s,a)$ as:
\begin{align*} \\
    &\frac{2L \sqrt{2|\mathcal{S}|\ln(2|\mathcal{S}||\mathcal{A}|N\frac{t^4}{\delta})}}{N \gamma (1-\eta)(1-\omega)} \sum_{i=1}^N   \sqrt{1 / \max\{1,N^t_i(s,a)\}} \\
    &\leq \frac{4L \sqrt{2|\mathcal{S}|\ln(2|\mathcal{S}||\mathcal{A}|N\frac{T}{\delta})}}{N \gamma (1-\eta)(1-\omega)} \sum_{i=1}^N \sqrt{1 / \max\{1,N^t_i(s,a)\}} \\
\end{align*}
Summing this term over time, we get:
\allowdisplaybreaks
\begin{align*}
\allowdisplaybreaks
     &\sum_{t=t_0+1}^T \frac{4L \sqrt{2|\mathcal{S}|\ln(2|\mathcal{S}||\mathcal{A}|N\frac{T}{\delta})}}{N \gamma (1-\eta)(1-\omega)} \sum_{i=1}^N \sqrt{1 / \max\{1,N^t_i(s,a)\}} \\ \\
    &= \frac{4L \sqrt{2|\mathcal{S}|\ln(2|\mathcal{S}||\mathcal{A}|N\frac{T}{\delta})}}{N \gamma (1-\eta)(1-\omega)} \sum_{i=1}^N \sum_{t=t_0+1}^T \sqrt{1 / \max\{1,N^t_i(s,a)\}} \\
\end{align*}
We defined $G_i$ as the expected number of episodes in the interval where all state-action pairs for arm $i$ is visited at least once. Then we have, $\sum_{t=t_0+1}^T \sqrt{1 / \max\{1,N^t_i(s,a)\}} \leq \sum_{j=1}^{T/G_i} G_i/\sqrt{j}$. Note that $\sum_{j=1}^T 1/\sqrt{j} \leq 2\sqrt{T}$.
Therefore, $\frac{2 L}{N \gamma (1-\eta)(1-\omega) } \sum_{t=t_0+1}^T \sum_{i=1}^N d_i^t(s,a) = \frac{2L \sqrt{2|\mathcal{S}|\ln(2|\mathcal{S}||\mathcal{A}|N\frac{t^4}{\delta})}}{N \gamma (1-\eta)(1-\omega)} \sum_{i=1}^N \sqrt{G_i T}$. With $G = \max_i G_i$, we get $FR_1^T = $
\begin{align*}
&\sum_{t=1}^T \sum_{i \in [N]} \left | \pi_i^* - \pi_i^t \right | = \sum_{t=1}^{t_0} \sum_{i \in [N]} \left | \pi_i^* - \pi_i^t \right | + \sum_{t={t_0+1}}^T \sum_{i \in [N]} \left | \pi_i^* - \pi_i^t \right |\\
 &\leq Nt_0 + \frac{48L \sqrt{2G|\mathcal{S}|T\ln(2|\mathcal{S}||\mathcal{A}|N\frac{T}{\delta})}}{ \gamma (1-\eta)(1-\omega)}
 \end{align*}
As $|\mathcal{S}| = |\mathcal{A}| = 2$, we get $FR_1^T \leq N t_0 + \frac{96L \sqrt{GT\ln(8N\frac{T}{\delta})}}{\gamma (1-\eta)(1-\omega)}$.

Note that Fairness Regret when $P^* \notin B^t$ is $FR_2^T \leq N\sqrt{T}$ with probability greater than $(1-\delta)$ ($\because Prob(P^* \in B^t) \geq 1 - \frac{\delta}{t^4}$).

Total Regret = $FR^T = FR_1^T + FR_2^T \leq  N t_0 + N\sqrt{T}+\frac{96L \sqrt{GT\ln(8N\frac{T}{\delta})}}{\gamma (1-\eta)(1-\omega)} $ with probability greater than $1-\delta$
\end{proof}

As \ouralgo \ is defined using $P_i^{+,t}$, which is a valid transition matrix, we get the following corollary
\begin{cor}
    For $t > t_0$, the fairness regret of \ouralgo \ when $K=1$ is $FR^T = \mathcal{O}\left( \frac{L \sqrt{GT\ln(8N\frac{T}{\delta})}}{\gamma (1-\eta)(1-\omega)} \right)$  with probability at least $1-\delta$.
\end{cor}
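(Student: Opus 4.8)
The plan is to obtain this Corollary as an immediate specialization of Theorem~\ref{thm:regret}. That theorem is stated for \emph{any} choice of valid transition matrices $P_i^t \in B_i^t$, and its proof uses no property of $P_i^t$ beyond membership in the confidence ball (it only invokes Theorem~\ref{thm:mu}, which itself holds for arbitrary $P_i^t \in B_i^t$). Since \ouralgo\ instantiates the generic policy of Equation~(\ref{eqn:our_policy}) with the particular choice $P_i^t = P_i^{+,t}$, the entire task reduces to verifying one fact: that the optimistic matrix $P_i^{+,t}$ is itself a valid transition matrix lying in $B_i^t$. Once this is checked, substituting $P_i^t = P_i^{+,t}$ into Theorem~\ref{thm:regret} reproduces the claimed bound verbatim, inheriting both the regime $t > t_0$ and the high-probability guarantee $1-\delta$.

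First I would confirm that $P_i^{+,t}(s,a,\cdot)$ is a genuine distribution over the two states. Using $\hat{P}_i^t(s,a,0) = 1 - \hat{P}_i^t(s,a,1)$, a short case split on whether the clipping at $1$ (resp.\ at $0$) is active shows that in every case $P_i^{+,t}(s,a,0) = 1 - P_i^{+,t}(s,a,1)$, so the two entries are nonnegative and sum to one. Hence $P_i^{+,t}$ is a legitimate transition matrix for each $(s,a)$.

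Next I would establish membership in the ball. By the two-state simplification derived just before Assumption~\ref{assume:eta}, $P \in B_i^t$ is equivalent to $|P_i(s,a,1) - \hat{P}_i^t(s,a,1)| \leq d_i^t(s,a)/2$ for all $s,a$. By construction $P_i^{+,t}(s,a,1)$ is obtained from $\hat{P}_i^t(s,a,1)$ by adding $d_i^t(s,a)/2$ and clipping to $[0,1]$, and clipping can only pull the value back toward the empirical estimate; therefore $|P_i^{+,t}(s,a,1) - \hat{P}_i^t(s,a,1)| \leq d_i^t(s,a)/2$, which is exactly $\|P_i^{+,t}(s,a,\cdot) - \hat{P}_i^t(s,a,\cdot)\|_1 \leq d_i^t(s,a)$. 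Thus $P_i^{+,t} \in B_i^t$.

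With these two observations in hand the Corollary is immediate: Theorem~\ref{thm:regret} applied to the valid choice $P_i^t = P_i^{+,t}$ yields $FR^T = \mathcal{O}\!\left(\frac{L\sqrt{GT\ln(8N\frac{T}{\delta})}}{\gamma(1-\eta)(1-\omega)}\right)$ with probability at least $1-\delta$. I do not expect any genuine obstacle here; the only point demanding care is the boundary/clipping bookkeeping in the two verification steps, which is precisely what guarantees that the estimator \ouralgo\ actually uses remains inside the confidence region so that the generic regret analysis transfers unchanged.
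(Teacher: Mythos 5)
Your proposal is correct and takes essentially the same route as the paper, which simply observes that \ouralgo\ instantiates the generic policy with $P_i^{+,t}$, a valid transition matrix in $B_i^t$, and then invokes Theorem~\ref{thm:regret}. Your additional verification that $P_i^{+,t}(s,a,\cdot)$ is a genuine distribution and that the clipping keeps it inside the confidence ball is a correct and slightly more careful version of the step the paper leaves implicit.
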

For $K>1$, the biggest challenge is to define an appropriate notion of fairness regret. One approach could be to take assumptions on $P^*$ and $g(\cdot)$ so that $\pi^*_i \leq \frac{1}{K}; \forall i \in [N]$ which gives 
$\hat{FR}^T = \sum_{t=1}^T \sum_{i \in [N]} \left | \hat{\pi}_i^* - \hat{\pi}_i^t \right|$
where $\hat{\pi}_i = K \times \pi_i$. This new fairness regret $\hat{FR}^T$ would only differ by a factor of $K$ to $FR^T$ (Equation ~\ref{dfn:regret_def}), and hence the sublinearity of Theorem ~\ref{thm:regret} would still hold.

\section{Experimental Section} \label{sec:experiments}
We now validate the efficacy of \ouralgo \ across different domains.
\subsection{Domains}
The three domains selected come from different values of true transition probabilities $P^*$. 

\emph{Synthetic Dataset}: Set the $P_i^*(s,a,s')$ values uniformly in $[0,1]$ $\forall \ i \in [N], \  s,s' \in \mathcal{S}, \ a\in\mathcal{A}$ 

\emph{Synthetic-alternate Dataset}: Similar to synthetic dataset, we set the $P_i^*(s,a,s')$ values uniformly in $[0,1]$ $\forall \ i \in [N], \  s,s' \in \mathcal{S}, \ a\in\mathcal{A}$. In addition, the following two constraints are imposed on $P_i^*(s,a,s')$. (1) Acting is always beneficial, i.e., $P_i^*(s,1,1) \geq P_i^*(s,0,1) \ \forall i \in [N] \ s \in \mathcal{S}$. (2) Starting in a good state is always beneficial, i.e., $P_i^*(1,a,1) \geq P_i^*(0,a,1) \ \forall i \in [N] \ a \in \mathcal{A}$.

\emph{CPAP Dataset}: Continuous positive airway pressure therapy (CPAP) is considered to be a very effective treatment for obstructive sleep apnoea. However, some patients do not necessarily adhere to the therapy. We use the Markov model of CPAP treatment given by ~\citet{kang2016modelling}. We adapt their three-state model into two states in a similar fashion to ~\cite{herlihy2023planning, li2023avoiding}. We combine states 2 and 3 into one state, and set the ratio of improvement of intervention $\alpha_h = 1.1$. 
We set $30\%$ of the total arms as non-adherers (to the therapy). To add heterogeneity to the arms, we add Gaussian noise with mean $0$ and variance $0.1$ to each arm's transition probabilities.  

\subsection{Experimental Setup}
For the experimental evaluation of fairness regret, we compare \ouralgo \ with an "Optimal" baseline, where in each episode $t$, Optimal policy pulls the arms with the $K$ highest value for $\mu_i^t$. Observe that as $\mu_i \in [-1,1]$, we can trivially claim that in Conditions ~\ref{cond:gamma} and ~\ref{cond:L}, $\gamma = g(-1)$ and $L=\max_{\mu_1,\mu_2 \in [-1,1]} \frac{g(\mu_1) - g(\mu_2)}{\mu_1 - \mu_2}$ hold. We use $\delta=0.01$ and set the merit function $g(\mu) = e^{c\mu}$, implying that $\gamma=e^{-c}$ and $L=ce^c$. We set $c=3$ and provide additional experiments with different values of $c$ in the Appendix. In line with our non-degeneracy assumption on $P^*$, we clip the transition probabilities in the range $[\epsilon, 1-\epsilon]$ with $\epsilon=0.01$ for all three datasets. We include an empirical comparison with FaWT-Q~\cite{li2022efficient} in the appendix. The results are averaged over 30 independent runs with different seed values.

We run the experiments for $T$=10k episodes and $H$=200 timesteps per episode for a total of $T \times H = 2\times10^6$ timesteps. We set the initial state of each arm randomly per episode. In accordance with our setting, we use a relatively high time horizon $H$, as our reward formulation and subsequent analysis are centered around steady state, and a longer time horizon would make it easier to achieve that steady state. For $K>1$ cases, we use the same definition of fairness regret (Equation ~\ref{dfn:regret_def}) as for $K=1$ cases. The code is available at \href{https://github.com/rchiso/MF-RMAB}{https://github.com/rchiso/MF-RMAB}. The experiments were executed on a Xeon processor with 64 GB RAM.

\subsection{Results}
\begin{figure}[t!]
\centering
    \includegraphics[width=0.3\textwidth]{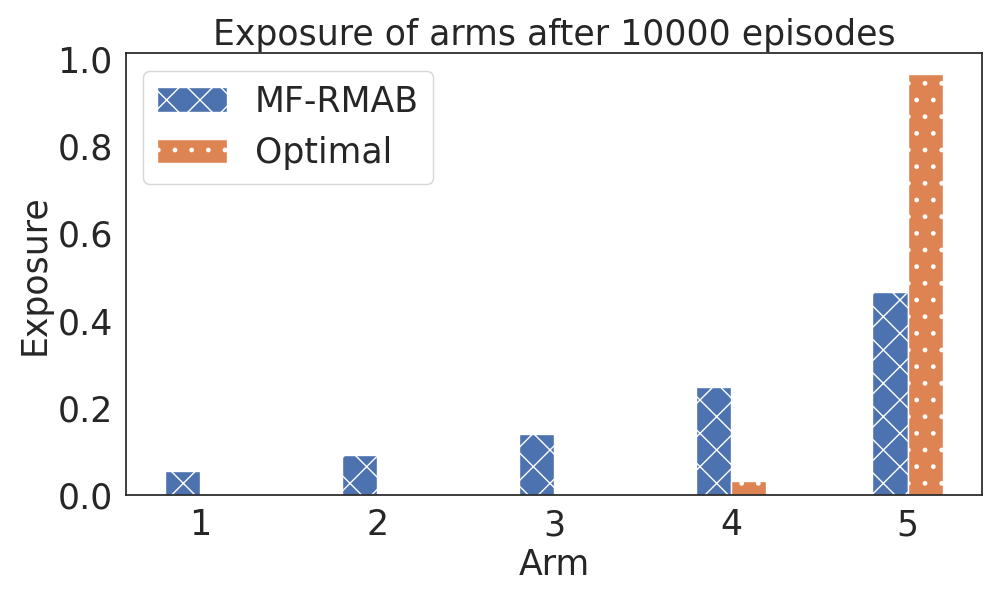}
    \caption{Exposure of arms under Optimal and \ouralgo \ on Synthetic dataset after 10k episodes for $N=5$, $K=1$. The arms are arranged in increasing order of their rewards.}
    \label{fig:exposure_synthetic}
\end{figure}
\begin{figure*}
\begin{subfigure}{0.23\textwidth}
\includegraphics[width=\textwidth]{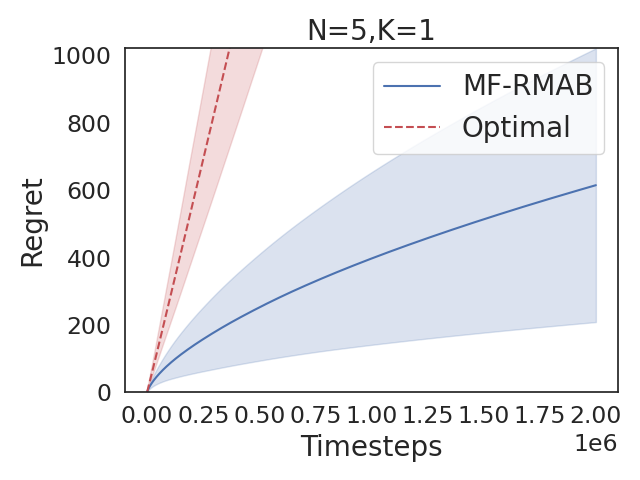}
\end{subfigure}
\begin{subfigure}{0.23\textwidth}
\includegraphics[width=\textwidth]{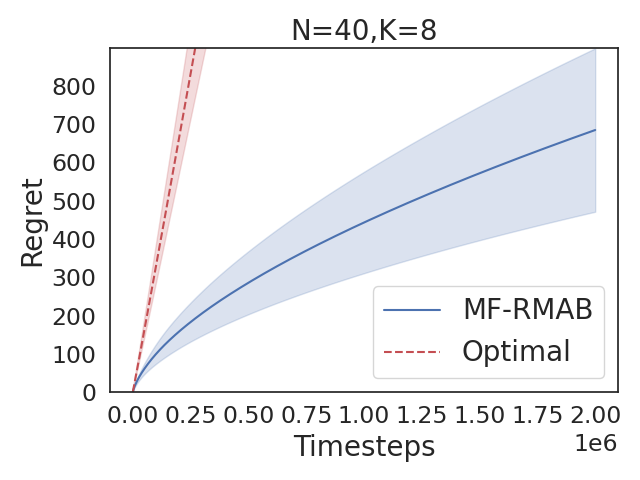}
\end{subfigure}
\begin{subfigure}{0.23\textwidth}
\includegraphics[width=\textwidth]{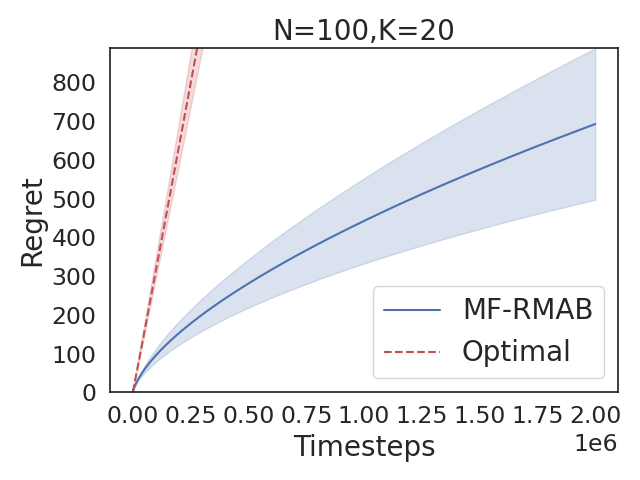}
\end{subfigure}
\begin{subfigure}{0.23\textwidth}
\includegraphics[width=\textwidth]{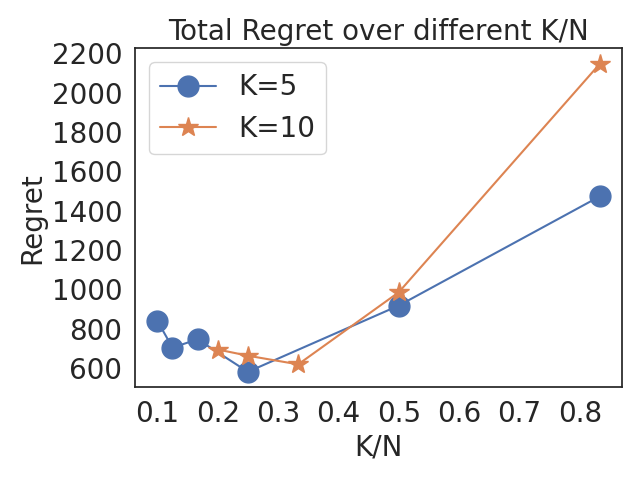}
\end{subfigure}
\caption{The first three plots show Regret vs. Time for different $K$ and $N$ settings on Synthetic dataset. The last plot shows the Regret with different $K/N$ values for $T \times H = 2\times 10^6$ timesteps.}
\label{fig:syn}
\end{figure*}
\begin{figure*}
\begin{subfigure}{0.23\textwidth}
\includegraphics[width=\textwidth]{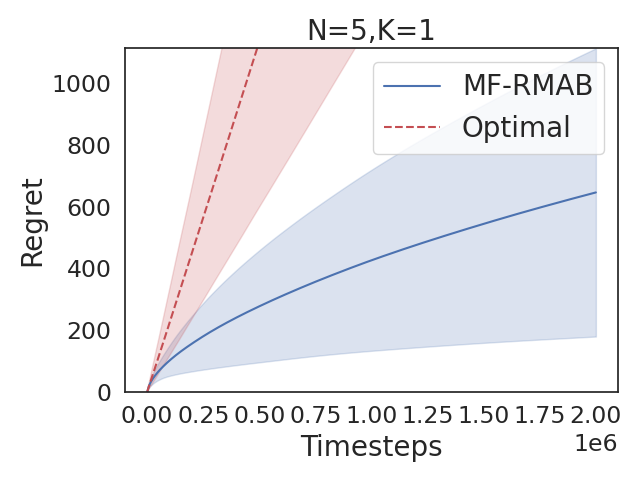}
\end{subfigure}
\begin{subfigure}{0.23\textwidth}
\includegraphics[width=\textwidth]{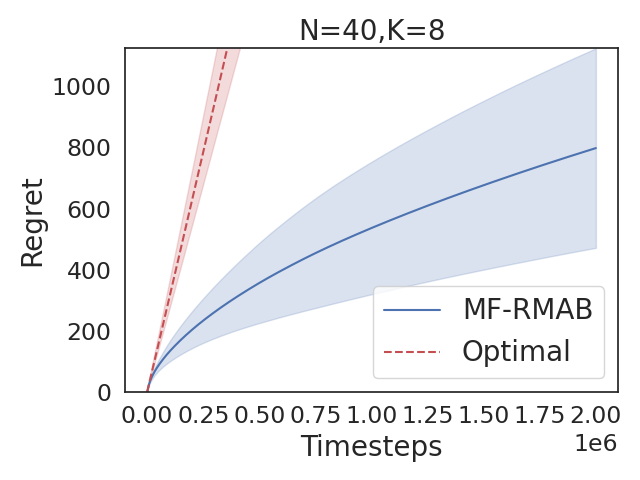}
\end{subfigure}
\begin{subfigure}{0.23\textwidth}
\includegraphics[width=\textwidth]{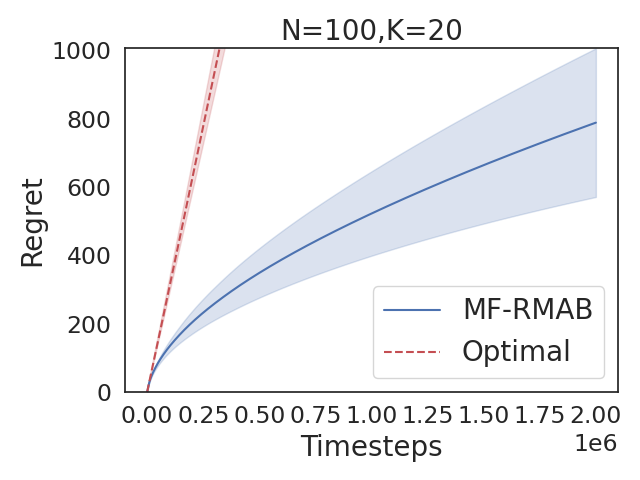}
\end{subfigure}
\begin{subfigure}{0.23\textwidth}
\includegraphics[width=\textwidth]{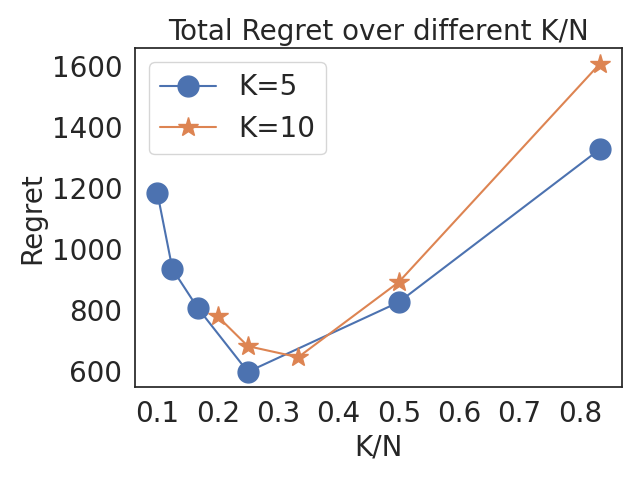}
\end{subfigure}
\caption{The first three plots show Regret vs. Time for different $K$ and $N$ settings on Synthetic-alternate dataset. The last plot shows the Regret with different $K/N$ values for $T \times H = 2\times 10^6$ timesteps.}
\label{fig:syn_alt}
\end{figure*}
\begin{figure*}
\begin{subfigure}{0.23\textwidth}
\includegraphics[width=\textwidth]{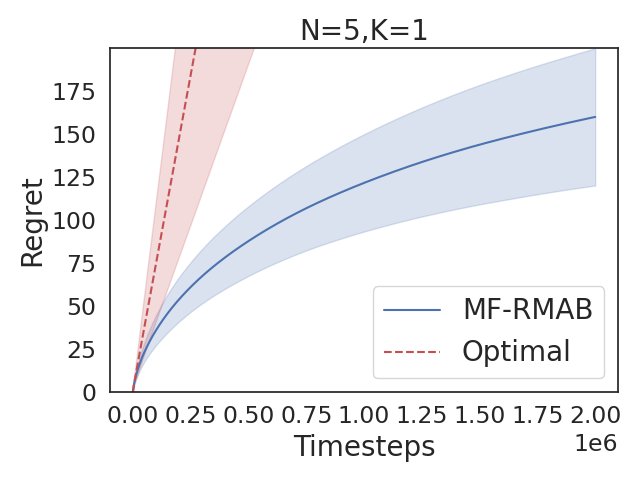}
\end{subfigure}
\begin{subfigure}{0.23\textwidth}
\includegraphics[width=\textwidth]{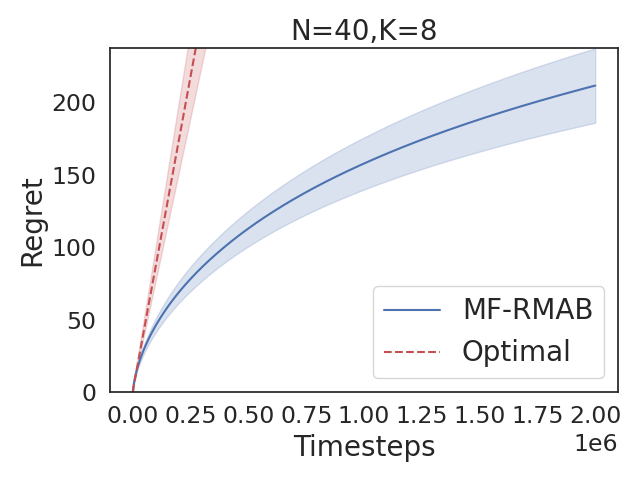}
\end{subfigure}
\begin{subfigure}{0.23\textwidth}
\includegraphics[width=\textwidth]{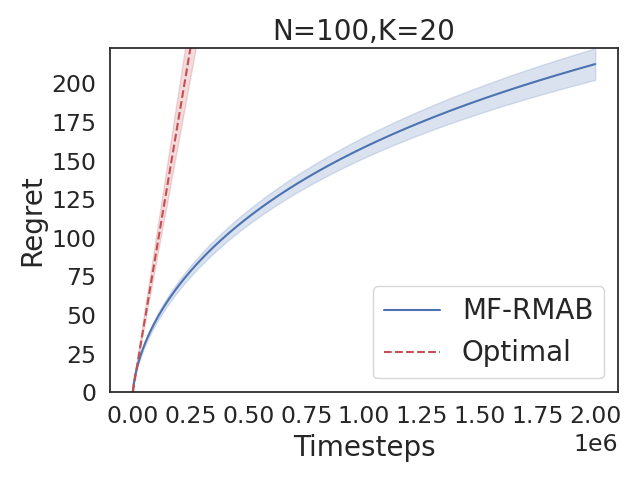}
\end{subfigure}
\begin{subfigure}{0.23\textwidth}
\includegraphics[width=\textwidth]{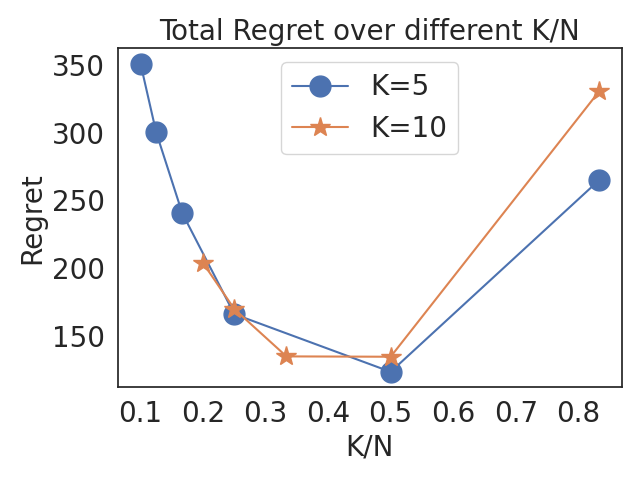}
\end{subfigure}
\caption{The first three plots show Regret vs. Time for different $K$ and $N$ settings on CPAP dataset. The last plot shows the Regret with different $K/N$ values for $T \times H = 2\times 10^6$ timesteps.}
\label{fig:cpap}
\end{figure*}
Figure ~\ref{fig:exposure_synthetic} shows the exposure arms get after 10k episodes on Synthetic dataset. Immediately, we can see the need for a fair policy, as Optimal tends to completely ignore sub-optimal arms, while \ouralgo\ gives exposure proportional to the merit of the arms, and ensures fairness. 
\begin{table}
\centering
\begin{tabular}[t!]{@{}ccccccc@{}}
    \toprule
    \textbf{} &
      \multicolumn{2}{c}{\textbf{Syn}} & 
      \multicolumn{2}{c}{\textbf{Syn-alt}} &
      \multicolumn{2}{c}{\textbf{CPAP}}  \\
      & {$G$} & {$t_0$} & {$G$} & {$t_0$} & {$G$} & {$t_0$} \\
      \cmidrule(r){2-3}\cmidrule(lr){4-5}\cmidrule(l){6-7}
    N=5 K=1 & 38 &  8 & 58 & 23 & 57 & 19  \\
    N=10 K=2 & 128 &  9 & 103 & 22 & 84 & 21  \\
    N=20 K=4 & 115 &  14 & 169 & 30 & 89 & 21  \\
    N=40 K=8 & 220 &  15 & 251 & 31 & 106 & 23  \\
    N=100 K=20 & 668 &  13 & 649 & 36 & 118 & 23 \\
    \bottomrule
  \end{tabular}
  \caption{Values of $G$ and $t_0$ on different datasets.}
  \label{tab:constants}
\end{table}
Table ~\ref{tab:constants} shows the empirical values of $t_0$ and $G$ over the three datasets. We can see that while $t_0$ remains mostly consistent across different values of $N$ and $K$, $G$ varies significantly. This is because as $N$ increases, even if $K$ increases proportionally with $N$, the probability of some arm receiving a pull decreases, so $G_i$ can get very large for some arm in the worst case.
As CPAP dataset is more homogeneous across arms as compared to the other two datasets, we see that the values of $G$ and $t_0$ are more uniform.

The first three plots of Figures ~\ref{fig:syn},~\ref{fig:syn_alt} and ~\ref{fig:cpap} show the various trends of fairness regret across the three datasets. We can see that \ouralgo \ incurs a sublinear regret, while Optimal is unable to learn a fair policy and exhibits linear regret. As Synthetic and Synthetic-alternate datasets have a large amount of variance in the transition probabilities, we observe a large variance in regret as well. 

The rightmost plot of Figures ~\ref{fig:syn},~\ref{fig:syn_alt} and ~\ref{fig:cpap} show the variation of total regret $FR^{T}, T=10$k over increasing $\frac{K}{N}$ ratio. We can observe that in Synthetic and Synthetic-alternate datasets, the regret reaches its minimum around $\frac{K}{N} \approx 0.3$, while in CPAP dataset, the minima is around $\frac{K}{N} \approx 0.5$. Therefore, we conclude that increasing $K$ does not necessarily help in learning the transition probabilities faster, and can end up increasing the regret instead.

\section{Conclusion/Future Work}
We introduce exposure fairness to the online RMAB setting in the form of \ourfair. We provide a sublinear bound on fairness regret in the single-pull case, and show that our algorithm \ouralgo \ works admirably even in multiple-pull case. Future work could include formally defining a robust formulation of fairness regret for the multiple-pull case, along with provable sublinear regret bounds. Another possible research direction can be to define Fairness Regret using other possible reward formulations.

\section*{Acknowledgement}
The author Shweta Jain would like to acknowledge the DST grant MTR/2022/000818 for providing the support to carry out this work.



\bibliographystyle{ACM-Reference-Format} 
\bibliography{biblio}


\begin{thebibliography}{35}


\ifx \showCODEN    \undefined \def \showCODEN     #1{\unskip}     \fi
\ifx \showDOI      \undefined \def \showDOI       #1{#1}\fi
\ifx \showISBNx    \undefined \def \showISBNx     #1{\unskip}     \fi
\ifx \showISBNxiii \undefined \def \showISBNxiii  #1{\unskip}     \fi
\ifx \showISSN     \undefined \def \showISSN      #1{\unskip}     \fi
\ifx \showLCCN     \undefined \def \showLCCN      #1{\unskip}     \fi
\ifx \shownote     \undefined \def \shownote      #1{#1}          \fi
\ifx \showarticletitle \undefined \def \showarticletitle #1{#1}   \fi
\ifx \showURL      \undefined \def \showURL       {\relax}        \fi
\providecommand\bibfield[2]{#2}
\providecommand\bibinfo[2]{#2}
\providecommand\natexlab[1]{#1}
\providecommand\showeprint[2][]{arXiv:#2}

\bibitem[\protect\citeauthoryear{Abraham, Alonso, Kandylas, and Slivkins}{Abraham et~al\mbox{.}}{2013}]%
        {abraham2013adaptive}
\bibfield{author}{\bibinfo{person}{Ittai Abraham}, \bibinfo{person}{Omar Alonso}, \bibinfo{person}{Vasilis Kandylas}, {and} \bibinfo{person}{Aleksandrs Slivkins}.} \bibinfo{year}{2013}\natexlab{}.
\newblock \showarticletitle{Adaptive crowdsourcing algorithms for the bandit survey problem}. In \bibinfo{booktitle}{\emph{Conference on learning theory}}. PMLR, \bibinfo{pages}{882--910}.
\newblock


\bibitem[\protect\citeauthoryear{Akbarzadeh and Mahajan}{Akbarzadeh and Mahajan}{2019}]%
        {akbarzadeh2019restless}
\bibfield{author}{\bibinfo{person}{Nima Akbarzadeh} {and} \bibinfo{person}{Aditya Mahajan}.} \bibinfo{year}{2019}\natexlab{}.
\newblock \showarticletitle{Restless bandits with controlled restarts: Indexability and computation of Whittle index}. In \bibinfo{booktitle}{\emph{2019 IEEE 58th conference on decision and control (CDC)}}. IEEE, \bibinfo{pages}{7294--7300}.
\newblock


\bibitem[\protect\citeauthoryear{Avrachenkov and Borkar}{Avrachenkov and Borkar}{2022}]%
        {avrachenkov2022whittle}
\bibfield{author}{\bibinfo{person}{Konstantin~E Avrachenkov} {and} \bibinfo{person}{Vivek~S Borkar}.} \bibinfo{year}{2022}\natexlab{}.
\newblock \showarticletitle{Whittle index based Q-learning for restless bandits with average reward}.
\newblock \bibinfo{journal}{\emph{Automatica}}  \bibinfo{volume}{139} (\bibinfo{year}{2022}), \bibinfo{pages}{110186}.
\newblock


\bibitem[\protect\citeauthoryear{Berry and Fristedt}{Berry and Fristedt}{1985}]%
        {berry1985bandit}
\bibfield{author}{\bibinfo{person}{Donald~A Berry} {and} \bibinfo{person}{Bert Fristedt}.} \bibinfo{year}{1985}\natexlab{}.
\newblock \showarticletitle{Bandit problems: sequential allocation of experiments (Monographs on statistics and applied probability)}.
\newblock \bibinfo{journal}{\emph{London: Chapman and Hall}} \bibinfo{volume}{5}, \bibinfo{number}{71-87} (\bibinfo{year}{1985}), \bibinfo{pages}{7--7}.
\newblock


\bibitem[\protect\citeauthoryear{Biswas, Aggarwal, Varakantham, and Tambe}{Biswas et~al\mbox{.}}{2021}]%
        {biswas2021learn}
\bibfield{author}{\bibinfo{person}{Arpita Biswas}, \bibinfo{person}{Gaurav Aggarwal}, \bibinfo{person}{Pradeep Varakantham}, {and} \bibinfo{person}{Milind Tambe}.} \bibinfo{year}{2021}\natexlab{}.
\newblock \showarticletitle{Learn to intervene: An adaptive learning policy for restless bandits in application to preventive healthcare}.
\newblock \bibinfo{journal}{\emph{arXiv preprint arXiv:2105.07965}} (\bibinfo{year}{2021}).
\newblock


\bibitem[\protect\citeauthoryear{Chen, Cuellar, Luo, Modi, Nemlekar, and Nikolaidis}{Chen et~al\mbox{.}}{2020}]%
        {chen2020fair}
\bibfield{author}{\bibinfo{person}{Yifang Chen}, \bibinfo{person}{Alex Cuellar}, \bibinfo{person}{Haipeng Luo}, \bibinfo{person}{Jignesh Modi}, \bibinfo{person}{Heramb Nemlekar}, {and} \bibinfo{person}{Stefanos Nikolaidis}.} \bibinfo{year}{2020}\natexlab{}.
\newblock \showarticletitle{Fair contextual multi-armed bandits: Theory and experiments}. In \bibinfo{booktitle}{\emph{Conference on Uncertainty in Artificial Intelligence}}. PMLR, \bibinfo{pages}{181--190}.
\newblock


\bibitem[\protect\citeauthoryear{Fu, Nazarathy, Moka, and Taylor}{Fu et~al\mbox{.}}{2019}]%
        {fu2019towards}
\bibfield{author}{\bibinfo{person}{Jing Fu}, \bibinfo{person}{Yoni Nazarathy}, \bibinfo{person}{Sarat Moka}, {and} \bibinfo{person}{Peter~G Taylor}.} \bibinfo{year}{2019}\natexlab{}.
\newblock \showarticletitle{Towards q-learning the whittle index for restless bandits}. In \bibinfo{booktitle}{\emph{2019 Australian \& New Zealand Control Conference (ANZCC)}}. IEEE, \bibinfo{pages}{249--254}.
\newblock


\bibitem[\protect\citeauthoryear{Heidari and Krause}{Heidari and Krause}{2018}]%
        {heidari2018preventing}
\bibfield{author}{\bibinfo{person}{Hoda Heidari} {and} \bibinfo{person}{Andreas Krause}.} \bibinfo{year}{2018}\natexlab{}.
\newblock \showarticletitle{Preventing Disparate Treatment in Sequential Decision Making.}. In \bibinfo{booktitle}{\emph{IJCAI}}. \bibinfo{pages}{2248--2254}.
\newblock


\bibitem[\protect\citeauthoryear{Herlihy, Prins, Srinivasan, and Dickerson}{Herlihy et~al\mbox{.}}{2023}]%
        {herlihy2023planning}
\bibfield{author}{\bibinfo{person}{Christine Herlihy}, \bibinfo{person}{Aviva Prins}, \bibinfo{person}{Aravind Srinivasan}, {and} \bibinfo{person}{John~P Dickerson}.} \bibinfo{year}{2023}\natexlab{}.
\newblock \showarticletitle{Planning to fairly allocate: Probabilistic fairness in the restless bandit setting}. In \bibinfo{booktitle}{\emph{Proceedings of the 29th ACM SIGKDD Conference on Knowledge Discovery and Data Mining}}. \bibinfo{pages}{732--740}.
\newblock


\bibitem[\protect\citeauthoryear{Hodge and Glazebrook}{Hodge and Glazebrook}{2015}]%
        {hodge2015asymptotic}
\bibfield{author}{\bibinfo{person}{David~J Hodge} {and} \bibinfo{person}{Kevin~D Glazebrook}.} \bibinfo{year}{2015}\natexlab{}.
\newblock \showarticletitle{On the asymptotic optimality of greedy index heuristics for multi-action restless bandits}.
\newblock \bibinfo{journal}{\emph{Advances in Applied Probability}} \bibinfo{volume}{47}, \bibinfo{number}{3} (\bibinfo{year}{2015}), \bibinfo{pages}{652--667}.
\newblock


\bibitem[\protect\citeauthoryear{Joseph, Kearns, Morgenstern, and Roth}{Joseph et~al\mbox{.}}{2016}]%
        {joseph2016fairness}
\bibfield{author}{\bibinfo{person}{Matthew Joseph}, \bibinfo{person}{Michael Kearns}, \bibinfo{person}{Jamie~H Morgenstern}, {and} \bibinfo{person}{Aaron Roth}.} \bibinfo{year}{2016}\natexlab{}.
\newblock \showarticletitle{Fairness in learning: Classic and contextual bandits}.
\newblock \bibinfo{journal}{\emph{Advances in neural information processing systems}}  \bibinfo{volume}{29} (\bibinfo{year}{2016}).
\newblock


\bibitem[\protect\citeauthoryear{Jung, Abeille, and Tewari}{Jung et~al\mbox{.}}{2019}]%
        {jung2019thompson}
\bibfield{author}{\bibinfo{person}{Young~Hun Jung}, \bibinfo{person}{Marc Abeille}, {and} \bibinfo{person}{Ambuj Tewari}.} \bibinfo{year}{2019}\natexlab{}.
\newblock \showarticletitle{Thompson sampling in non-episodic restless bandits}.
\newblock \bibinfo{journal}{\emph{arXiv preprint arXiv:1910.05654}} (\bibinfo{year}{2019}).
\newblock


\bibitem[\protect\citeauthoryear{Jung and Tewari}{Jung and Tewari}{2019}]%
        {jung2019regret}
\bibfield{author}{\bibinfo{person}{Young~Hun Jung} {and} \bibinfo{person}{Ambuj Tewari}.} \bibinfo{year}{2019}\natexlab{}.
\newblock \showarticletitle{Regret bounds for thompson sampling in episodic restless bandit problems}.
\newblock \bibinfo{journal}{\emph{Advances in Neural Information Processing Systems}}  \bibinfo{volume}{32} (\bibinfo{year}{2019}).
\newblock


\bibitem[\protect\citeauthoryear{Kang, Sawyer, Griffin, and Prabhu}{Kang et~al\mbox{.}}{2016}]%
        {kang2016modelling}
\bibfield{author}{\bibinfo{person}{Yuncheol Kang}, \bibinfo{person}{Amy~M Sawyer}, \bibinfo{person}{Paul~M Griffin}, {and} \bibinfo{person}{Vittaldas~V Prabhu}.} \bibinfo{year}{2016}\natexlab{}.
\newblock \showarticletitle{Modelling adherence behaviour for the treatment of obstructive sleep apnoea}.
\newblock \bibinfo{journal}{\emph{European journal of operational research}} \bibinfo{volume}{249}, \bibinfo{number}{3} (\bibinfo{year}{2016}), \bibinfo{pages}{1005--1013}.
\newblock


\bibitem[\protect\citeauthoryear{Killian, Biswas, Shah, and Tambe}{Killian et~al\mbox{.}}{2021a}]%
        {killian2021q}
\bibfield{author}{\bibinfo{person}{Jackson~A Killian}, \bibinfo{person}{Arpita Biswas}, \bibinfo{person}{Sanket Shah}, {and} \bibinfo{person}{Milind Tambe}.} \bibinfo{year}{2021}\natexlab{a}.
\newblock \showarticletitle{Q-learning Lagrange policies for multi-action restless bandits}. In \bibinfo{booktitle}{\emph{Proceedings of the 27th ACM SIGKDD Conference on Knowledge Discovery \& Data Mining}}. \bibinfo{pages}{871--881}.
\newblock


\bibitem[\protect\citeauthoryear{Killian, Jain, Jia, Amar, Huang, and Tambe}{Killian et~al\mbox{.}}{2023a}]%
        {killian2023equitable}
\bibfield{author}{\bibinfo{person}{Jackson~A Killian}, \bibinfo{person}{Manish Jain}, \bibinfo{person}{Yugang Jia}, \bibinfo{person}{Jonathan Amar}, \bibinfo{person}{Erich Huang}, {and} \bibinfo{person}{Milind Tambe}.} \bibinfo{year}{2023}\natexlab{a}.
\newblock \showarticletitle{Equitable Restless Multi-Armed Bandits: A General Framework Inspired By Digital Health}.
\newblock \bibinfo{journal}{\emph{arXiv preprint arXiv:2308.09726}} (\bibinfo{year}{2023}).
\newblock


\bibitem[\protect\citeauthoryear{Killian, Lalan, Mate, Jain, Taneja, and Tambe}{Killian et~al\mbox{.}}{2023b}]%
        {killian2023adherence}
\bibfield{author}{\bibinfo{person}{Jackson~A Killian}, \bibinfo{person}{Arshika Lalan}, \bibinfo{person}{Aditya Mate}, \bibinfo{person}{Manish Jain}, \bibinfo{person}{Aparna Taneja}, {and} \bibinfo{person}{Milind Tambe}.} \bibinfo{year}{2023}\natexlab{b}.
\newblock \showarticletitle{Adherence Bandits}.
\newblock  (\bibinfo{year}{2023}).
\newblock


\bibitem[\protect\citeauthoryear{Killian, Perrault, and Tambe}{Killian et~al\mbox{.}}{2021b}]%
        {killian2021beyond}
\bibfield{author}{\bibinfo{person}{Jackson~A Killian}, \bibinfo{person}{Andrew Perrault}, {and} \bibinfo{person}{Milind Tambe}.} \bibinfo{year}{2021}\natexlab{b}.
\newblock \showarticletitle{Beyond" to act or not to act": Fast lagrangian approaches to general multi-action restless bandits}. In \bibinfo{booktitle}{\emph{Proceedings of the 20th International Conference on Autonomous Agents and MultiAgent Systems}}. \bibinfo{pages}{710--718}.
\newblock


\bibitem[\protect\citeauthoryear{Li and Varakantham}{Li and Varakantham}{2022}]%
        {li2022efficient}
\bibfield{author}{\bibinfo{person}{Dexun Li} {and} \bibinfo{person}{Pradeep Varakantham}.} \bibinfo{year}{2022}\natexlab{}.
\newblock \showarticletitle{Efficient resource allocation with fairness constraints in restless multi-armed bandits}. In \bibinfo{booktitle}{\emph{Uncertainty in Artificial Intelligence}}. PMLR, \bibinfo{pages}{1158--1167}.
\newblock


\bibitem[\protect\citeauthoryear{Li and Varakantham}{Li and Varakantham}{2023}]%
        {li2023avoiding}
\bibfield{author}{\bibinfo{person}{Dexun Li} {and} \bibinfo{person}{Pradeep Varakantham}.} \bibinfo{year}{2023}\natexlab{}.
\newblock \showarticletitle{Avoiding Starvation of Arms in Restless Multi-Armed Bandits}. In \bibinfo{booktitle}{\emph{Proceedings of the 2023 International Conference on Autonomous Agents and Multiagent Systems}}. \bibinfo{pages}{1303--1311}.
\newblock


\bibitem[\protect\citeauthoryear{Li, Liu, and Ji}{Li et~al\mbox{.}}{2019}]%
        {li2019combinatorial}
\bibfield{author}{\bibinfo{person}{Fengjiao Li}, \bibinfo{person}{Jia Liu}, {and} \bibinfo{person}{Bo Ji}.} \bibinfo{year}{2019}\natexlab{}.
\newblock \showarticletitle{Combinatorial sleeping bandits with fairness constraints}.
\newblock \bibinfo{journal}{\emph{IEEE Transactions on Network Science and Engineering}} \bibinfo{volume}{7}, \bibinfo{number}{3} (\bibinfo{year}{2019}), \bibinfo{pages}{1799--1813}.
\newblock


\bibitem[\protect\citeauthoryear{Mate, Killian, Xu, Perrault, and Tambe}{Mate et~al\mbox{.}}{2020}]%
        {mate2020collapsing}
\bibfield{author}{\bibinfo{person}{Aditya Mate}, \bibinfo{person}{Jackson Killian}, \bibinfo{person}{Haifeng Xu}, \bibinfo{person}{Andrew Perrault}, {and} \bibinfo{person}{Milind Tambe}.} \bibinfo{year}{2020}\natexlab{}.
\newblock \showarticletitle{Collapsing bandits and their application to public health intervention}.
\newblock \bibinfo{journal}{\emph{Advances in Neural Information Processing Systems}}  \bibinfo{volume}{33} (\bibinfo{year}{2020}), \bibinfo{pages}{15639--15650}.
\newblock


\bibitem[\protect\citeauthoryear{Mate, Madaan, Taneja, Madhiwalla, Verma, Singh, Hegde, Varakantham, and Tambe}{Mate et~al\mbox{.}}{2022}]%
        {mate2022field}
\bibfield{author}{\bibinfo{person}{Aditya Mate}, \bibinfo{person}{Lovish Madaan}, \bibinfo{person}{Aparna Taneja}, \bibinfo{person}{Neha Madhiwalla}, \bibinfo{person}{Shresth Verma}, \bibinfo{person}{Gargi Singh}, \bibinfo{person}{Aparna Hegde}, \bibinfo{person}{Pradeep Varakantham}, {and} \bibinfo{person}{Milind Tambe}.} \bibinfo{year}{2022}\natexlab{}.
\newblock \showarticletitle{Field study in deploying restless multi-armed bandits: Assisting non-profits in improving maternal and child health}. In \bibinfo{booktitle}{\emph{Proceedings of the AAAI Conference on Artificial Intelligence}}, Vol.~\bibinfo{volume}{36}. \bibinfo{pages}{12017--12025}.
\newblock


\bibitem[\protect\citeauthoryear{Mate, Perrault, and Tambe}{Mate et~al\mbox{.}}{2021}]%
        {mate2021risk}
\bibfield{author}{\bibinfo{person}{Aditya Mate}, \bibinfo{person}{Andrew Perrault}, {and} \bibinfo{person}{Milind Tambe}.} \bibinfo{year}{2021}\natexlab{}.
\newblock \showarticletitle{Risk-Aware Interventions in Public Health: Planning with Restless Multi-Armed Bandits.}. In \bibinfo{booktitle}{\emph{AAMAS}}. \bibinfo{pages}{880--888}.
\newblock


\bibitem[\protect\citeauthoryear{Modi, Mary, and Moy}{Modi et~al\mbox{.}}{2019}]%
        {modi2019transfer}
\bibfield{author}{\bibinfo{person}{Navikkumar Modi}, \bibinfo{person}{Philippe Mary}, {and} \bibinfo{person}{Christophe Moy}.} \bibinfo{year}{2019}\natexlab{}.
\newblock \showarticletitle{Transfer restless multi-armed bandit policy for energy-efficient heterogeneous cellular network}.
\newblock \bibinfo{journal}{\emph{EURASIP Journal on Advances in Signal Processing}}  \bibinfo{volume}{2019} (\bibinfo{year}{2019}), \bibinfo{pages}{1--19}.
\newblock


\bibitem[\protect\citeauthoryear{Ortner, Ryabko, Auer, and Munos}{Ortner et~al\mbox{.}}{2012}]%
        {ortner2012regret}
\bibfield{author}{\bibinfo{person}{Ronald Ortner}, \bibinfo{person}{Daniil Ryabko}, \bibinfo{person}{Peter Auer}, {and} \bibinfo{person}{R{\'e}mi Munos}.} \bibinfo{year}{2012}\natexlab{}.
\newblock \showarticletitle{Regret bounds for restless markov bandits}. In \bibinfo{booktitle}{\emph{International conference on algorithmic learning theory}}. Springer, \bibinfo{pages}{214--228}.
\newblock


\bibitem[\protect\citeauthoryear{Papadimitriou and Tsitsiklis}{Papadimitriou and Tsitsiklis}{1994}]%
        {papadimitriou1994complexity}
\bibfield{author}{\bibinfo{person}{Christos~H Papadimitriou} {and} \bibinfo{person}{John~N Tsitsiklis}.} \bibinfo{year}{1994}\natexlab{}.
\newblock \showarticletitle{The complexity of optimal queueing network control}. In \bibinfo{booktitle}{\emph{Proceedings of IEEE 9th annual conference on structure in complexity Theory}}. IEEE, \bibinfo{pages}{318--322}.
\newblock


\bibitem[\protect\citeauthoryear{Patil, Ghalme, Nair, and Narahari}{Patil et~al\mbox{.}}{2021}]%
        {patil2021achieving}
\bibfield{author}{\bibinfo{person}{Vishakha Patil}, \bibinfo{person}{Ganesh Ghalme}, \bibinfo{person}{Vineet Nair}, {and} \bibinfo{person}{Yadati Narahari}.} \bibinfo{year}{2021}\natexlab{}.
\newblock \showarticletitle{Achieving fairness in the stochastic multi-armed bandit problem}.
\newblock \bibinfo{journal}{\emph{The Journal of Machine Learning Research}} \bibinfo{volume}{22}, \bibinfo{number}{1} (\bibinfo{year}{2021}), \bibinfo{pages}{7885--7915}.
\newblock


\bibitem[\protect\citeauthoryear{Prins, Mate, Killian, Abebe, and Tambe}{Prins et~al\mbox{.}}{2020}]%
        {prins2020incorporating}
\bibfield{author}{\bibinfo{person}{Aviva Prins}, \bibinfo{person}{Aditya Mate}, \bibinfo{person}{Jackson~A Killian}, \bibinfo{person}{Rediet Abebe}, {and} \bibinfo{person}{Milind Tambe}.} \bibinfo{year}{2020}\natexlab{}.
\newblock \showarticletitle{Incorporating Healthcare Motivated Constraints in Restless Bandit Based Resource Allocation}.
\newblock \bibinfo{journal}{\emph{preprint}} (\bibinfo{year}{2020}).
\newblock


\bibitem[\protect\citeauthoryear{Qian, Zhang, Krishnamachari, and Tambe}{Qian et~al\mbox{.}}{2016}]%
        {qian2016restless}
\bibfield{author}{\bibinfo{person}{Yundi Qian}, \bibinfo{person}{Chao Zhang}, \bibinfo{person}{Bhaskar Krishnamachari}, {and} \bibinfo{person}{Milind Tambe}.} \bibinfo{year}{2016}\natexlab{}.
\newblock \showarticletitle{Restless poachers: Handling exploration-exploitation tradeoffs in security domains}. In \bibinfo{booktitle}{\emph{Proceedings of the 2016 International Conference on Autonomous Agents \& Multiagent Systems}}. \bibinfo{pages}{123--131}.
\newblock


\bibitem[\protect\citeauthoryear{Rothschild}{Rothschild}{1974}]%
        {rothschild1974two}
\bibfield{author}{\bibinfo{person}{Michael Rothschild}.} \bibinfo{year}{1974}\natexlab{}.
\newblock \showarticletitle{A two-armed bandit theory of market pricing}.
\newblock \bibinfo{journal}{\emph{Journal of Economic Theory}} \bibinfo{volume}{9}, \bibinfo{number}{2} (\bibinfo{year}{1974}), \bibinfo{pages}{185--202}.
\newblock


\bibitem[\protect\citeauthoryear{Wang, Xu, Taneja, and Tambe}{Wang et~al\mbox{.}}{2023}]%
        {wang2023optimistic}
\bibfield{author}{\bibinfo{person}{Kai Wang}, \bibinfo{person}{Lily Xu}, \bibinfo{person}{Aparna Taneja}, {and} \bibinfo{person}{Milind Tambe}.} \bibinfo{year}{2023}\natexlab{}.
\newblock \showarticletitle{Optimistic whittle index policy: Online learning for restless bandits}. In \bibinfo{booktitle}{\emph{Proceedings of the AAAI Conference on Artificial Intelligence}}, Vol.~\bibinfo{volume}{37}. \bibinfo{pages}{10131--10139}.
\newblock


\bibitem[\protect\citeauthoryear{Wang, Bai, Sun, and Joachims}{Wang et~al\mbox{.}}{2021}]%
        {wang2021fairness}
\bibfield{author}{\bibinfo{person}{Lequn Wang}, \bibinfo{person}{Yiwei Bai}, \bibinfo{person}{Wen Sun}, {and} \bibinfo{person}{Thorsten Joachims}.} \bibinfo{year}{2021}\natexlab{}.
\newblock \showarticletitle{Fairness of exposure in stochastic bandits}. In \bibinfo{booktitle}{\emph{International Conference on Machine Learning}}. PMLR, \bibinfo{pages}{10686--10696}.
\newblock


\bibitem[\protect\citeauthoryear{Weber and Weiss}{Weber and Weiss}{1990}]%
        {weber1990index}
\bibfield{author}{\bibinfo{person}{Richard~R Weber} {and} \bibinfo{person}{Gideon Weiss}.} \bibinfo{year}{1990}\natexlab{}.
\newblock \showarticletitle{On an index policy for restless bandits}.
\newblock \bibinfo{journal}{\emph{Journal of applied probability}} \bibinfo{volume}{27}, \bibinfo{number}{3} (\bibinfo{year}{1990}), \bibinfo{pages}{637--648}.
\newblock


\bibitem[\protect\citeauthoryear{Whittle}{Whittle}{1988}]%
        {whittle1988restless}
\bibfield{author}{\bibinfo{person}{Peter Whittle}.} \bibinfo{year}{1988}\natexlab{}.
\newblock \showarticletitle{Restless bandits: Activity allocation in a changing world}.
\newblock \bibinfo{journal}{\emph{Journal of applied probability}} \bibinfo{volume}{25}, \bibinfo{number}{A} (\bibinfo{year}{1988}), \bibinfo{pages}{287--298}.
\newblock


\end{thebibliography}

\newpage
\appendix
\onecolumn
\section{MF-RMAB vs FaWT-Q}

We now provide an empirical comparison with FaWT-Q \cite{li2022efficient}. The FaWT-Q algorithm implements the WIQL algorithm of \cite{biswas2021learn} with the extra fairness contraint that each arm is pulled atleast $\eta$ number of times every $L$ timesteps. In accordance with the original work, we fix $\eta=2$ and vary $L=\{15,30,50\}$. For MF-RMAB, we vary the merit function $g(\mu) = e^{c\mu}$ with different values of $c$. Higher values of $L$ and $c$ means a more optimal algorithm, while lower values ensures more fairness. There is no reward penalty for violating the fairness constraints in FaWT-Q. We set $H=200$ and $T=1000$. As \cite{li2022efficient} only provide results on Synthetic-alternate dataset, we run the simulations on the same. The results are averaged over 30 random seeds. The plots are shown in Figure~\ref{fig:benchmark}. 

\begin{figure}[ht!]
\centering
\begin{subfigure}{\textwidth}
\centering
\includegraphics[width=0.5\textwidth]{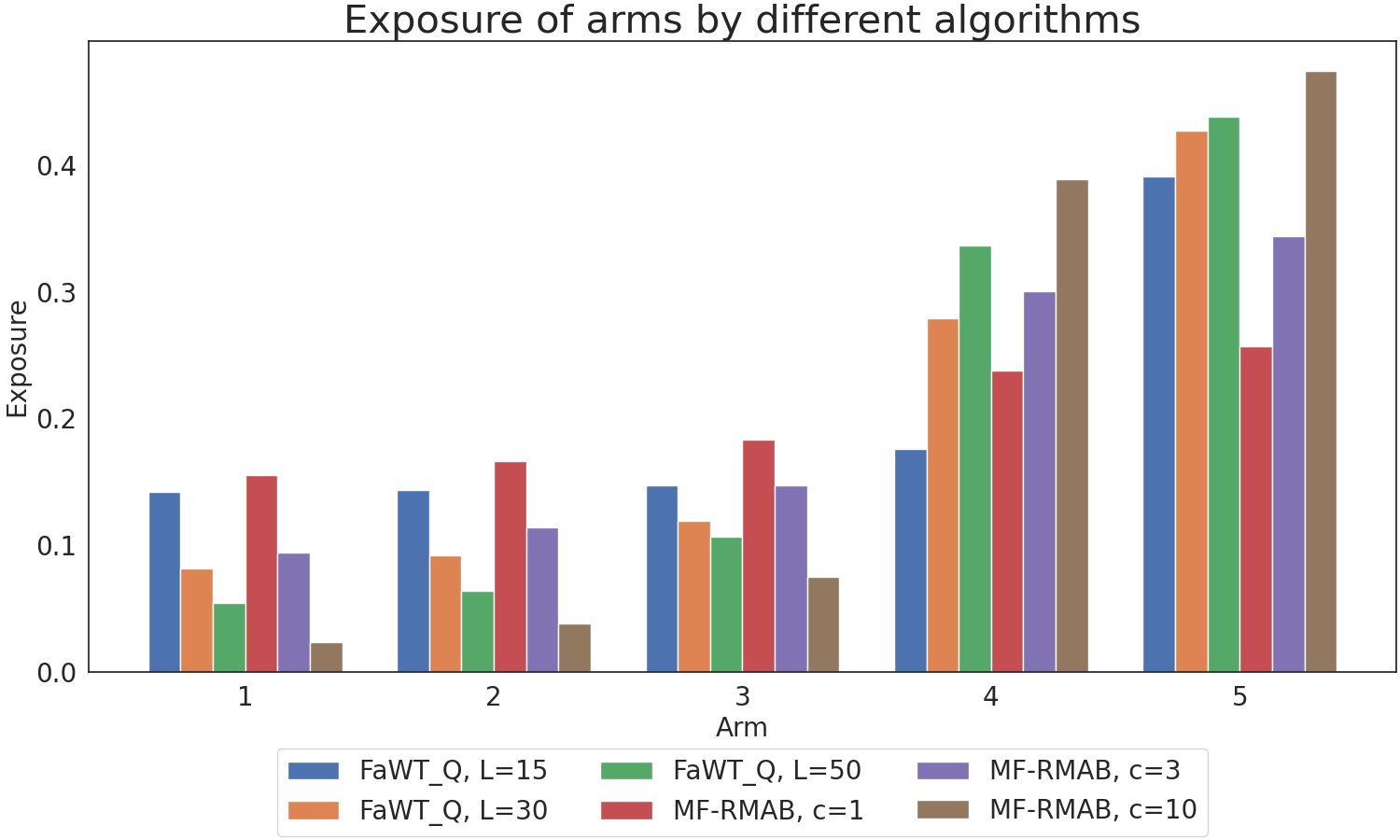}
\includegraphics[width=0.4\textwidth]{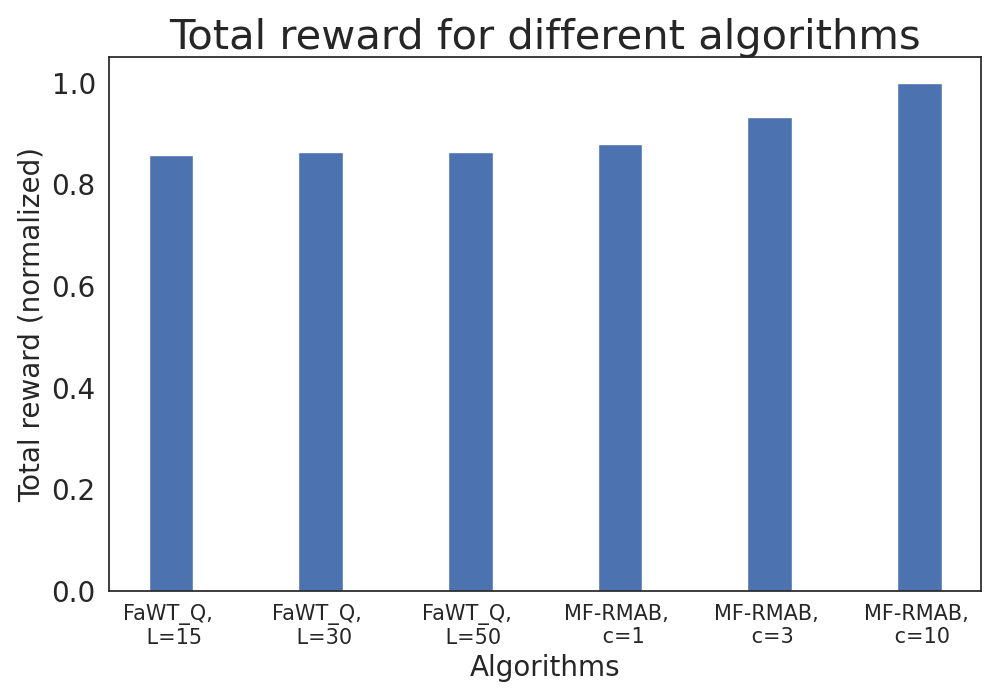}
\caption{$N=5$ and $K=1$}
\label{benchmark1}
\end{subfigure}
\begin{subfigure}{\textwidth}
\centering
\includegraphics[width=0.5\textwidth]{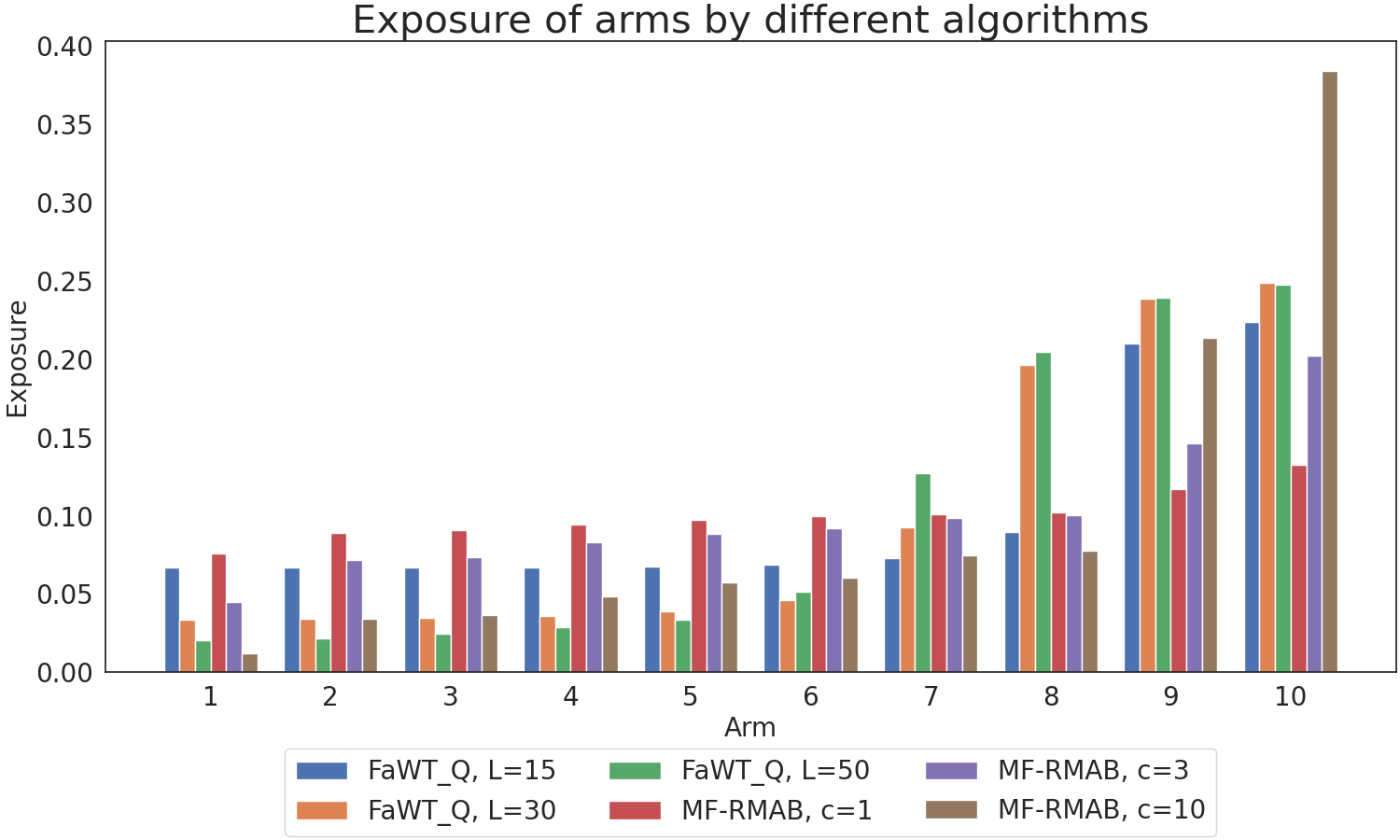}
\includegraphics[width=0.4\textwidth]{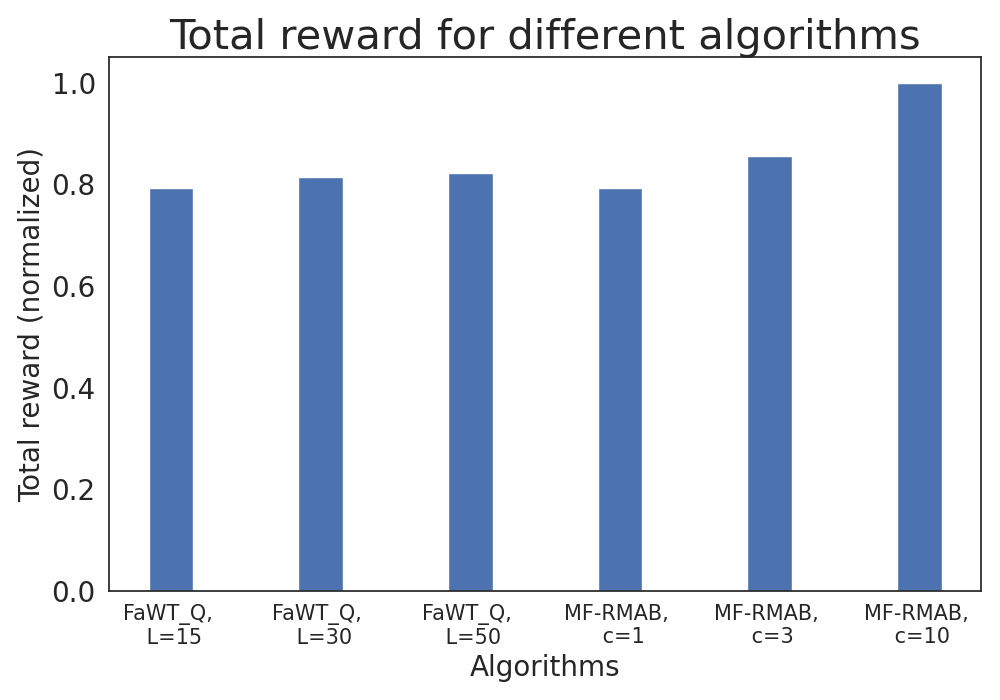}
\caption{$N=10$ and $K=2$}
\label{benchmark2}
\end{subfigure}
\caption{Comparison of MF-RMAB and FaWT-Q with respect to exposure and rewards. The y-axes are normalized to 1.}
\label{fig:benchmark}
\end{figure}

We can see that MF-RMAB provides a more egalitarian experience than FaWT-Q while performing equal or better in terms of reward. FaWT-Q tends to focus on the top few arms and ignores the rest, while MF-RMAB provides exposure to the worst arms as well. A very high value of $c$ makes the algorithm effectively optimal and thus offers little in terms of fairness. 

\newpage

\section{Additional Experiments}
We explore the trends in fairness regret for different values of $c$ in the merit function $g(\mu) = e^{c\mu}$.
\subsection{c=1}
\begin{figure}[!htbp]
    \centering
    \includegraphics[width=0.3\textwidth]{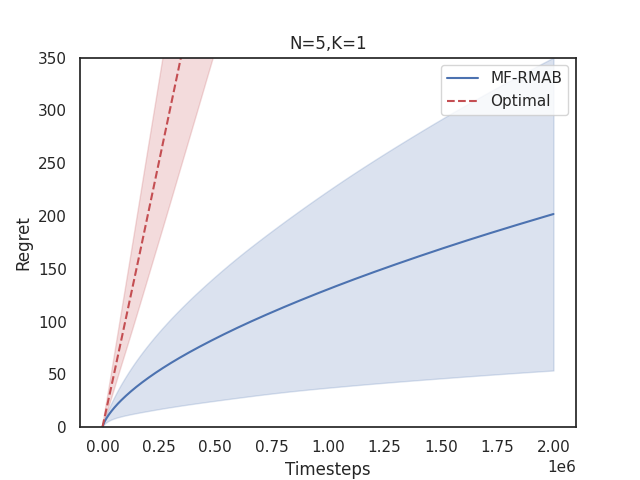}
    \includegraphics[width=0.3\textwidth]{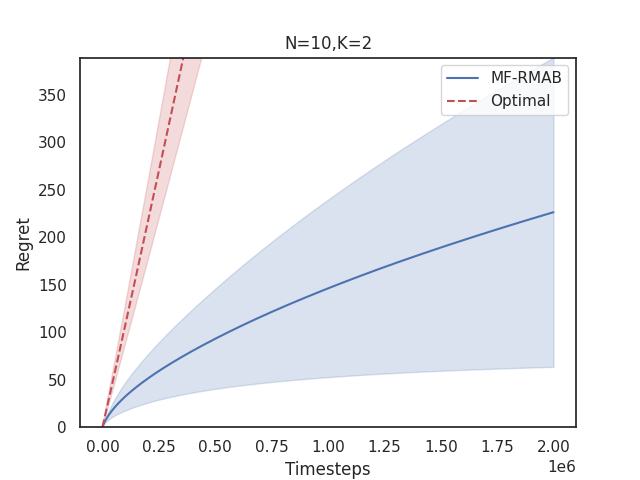}
    \includegraphics[width=0.3\textwidth]{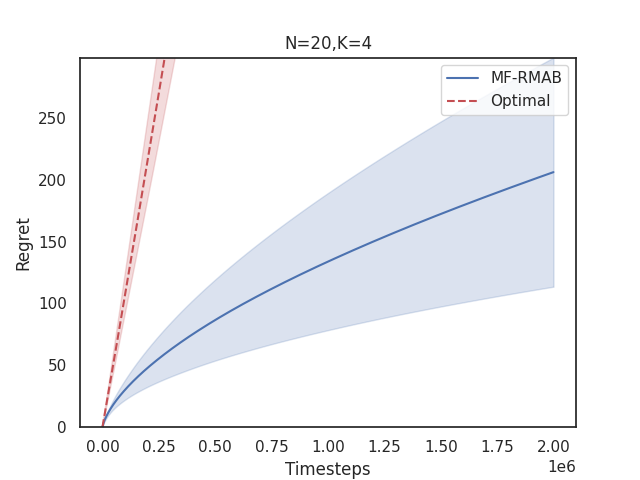}
    \includegraphics[width=0.3\textwidth]{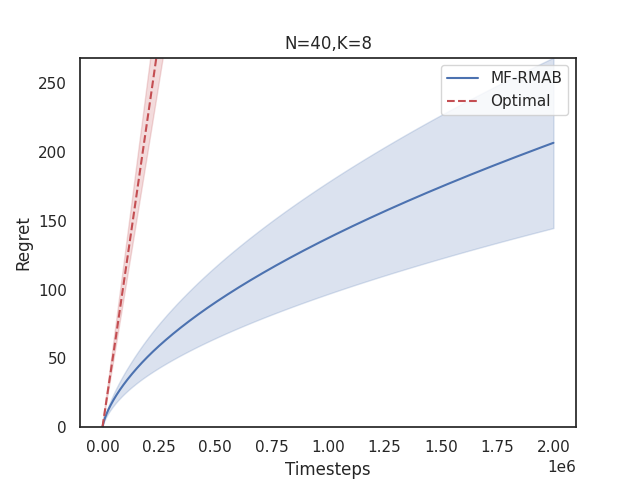}
    \includegraphics[width=0.3\textwidth]{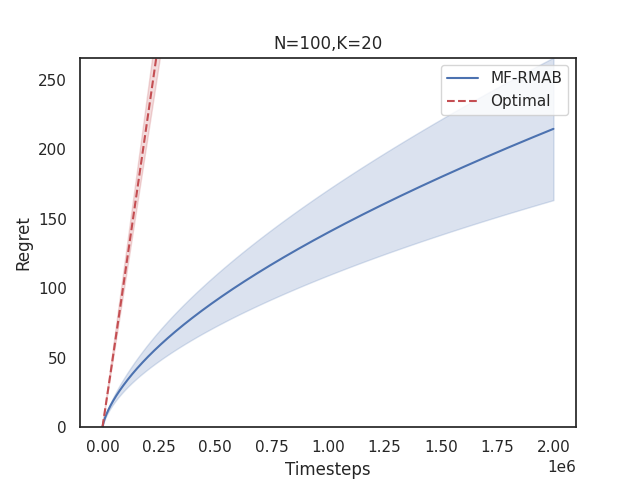}
    \caption{Fairness Regret on Synthetic dataset when $c=1$}
    \label{fig:c1_syn}
\end{figure}
\begin{figure}[!htbp]
    \centering
    \includegraphics[width=0.3\textwidth]{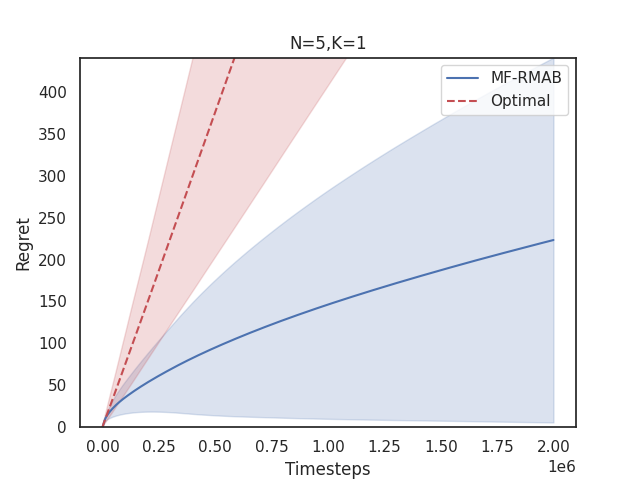}
    \includegraphics[width=0.3\textwidth]{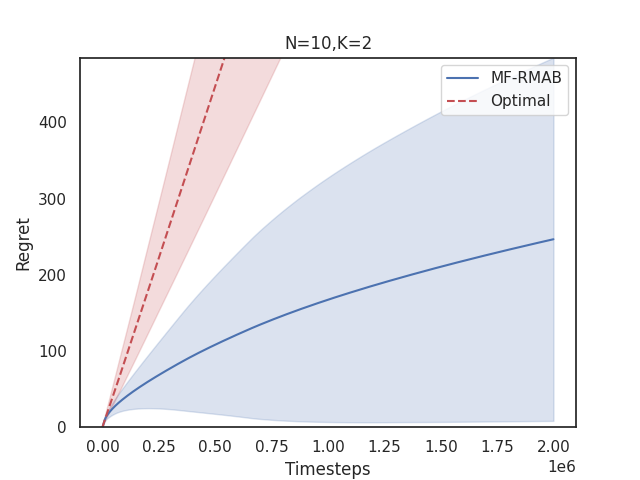}
    \includegraphics[width=0.3\textwidth]{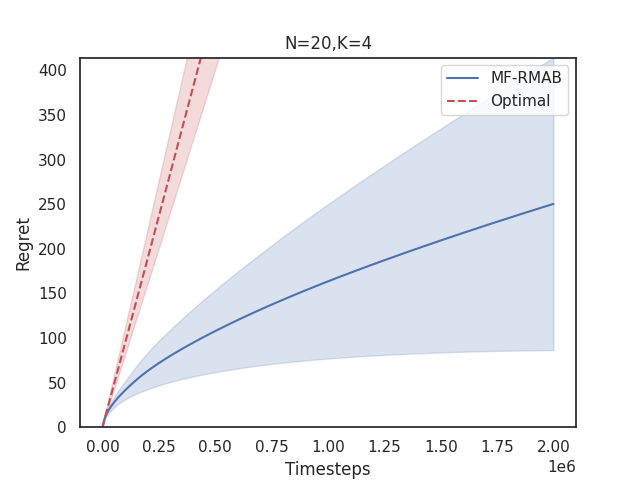}
    \includegraphics[width=0.3\textwidth]{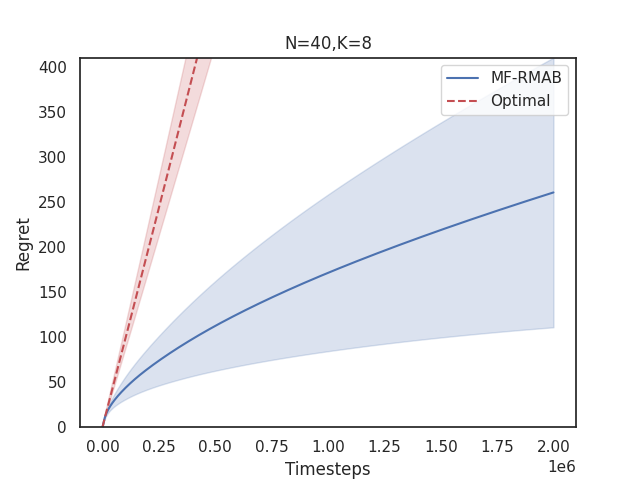}
    \includegraphics[width=0.3\textwidth]{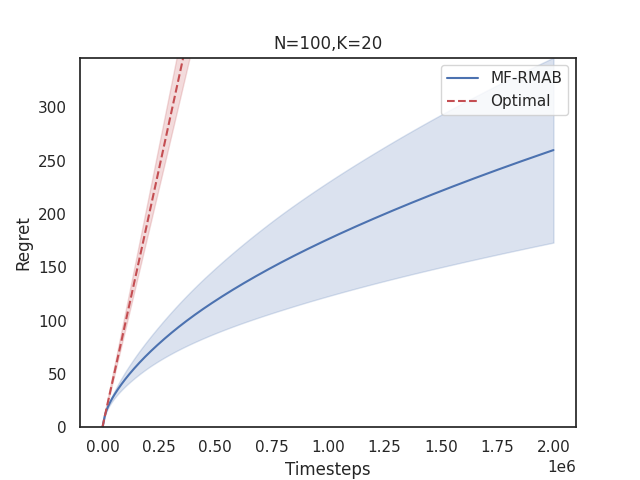}
    \caption{Fairness Regret on Synthetic-alternate dataset when $c=1$}
    \label{fig:c1_syn-alt}
\end{figure}
\begin{figure}[!htbp]
    \centering
    \includegraphics[width=0.3\textwidth]{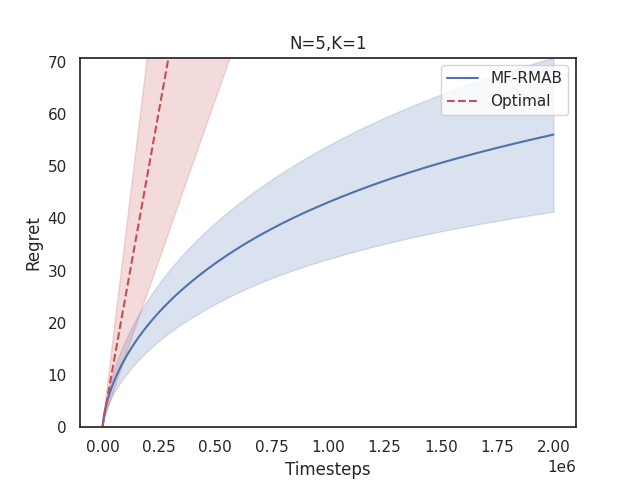}
    \includegraphics[width=0.3\textwidth]{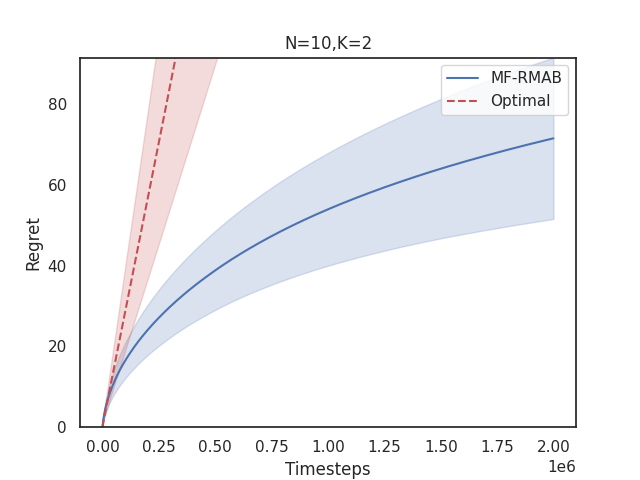}
    \includegraphics[width=0.3\textwidth]{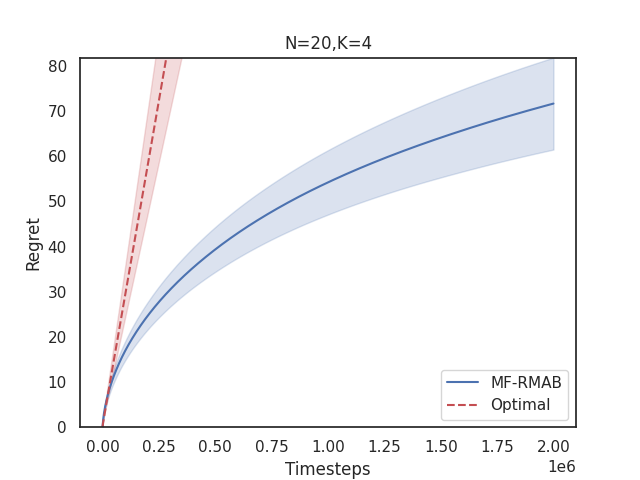}
    \includegraphics[width=0.3\textwidth]{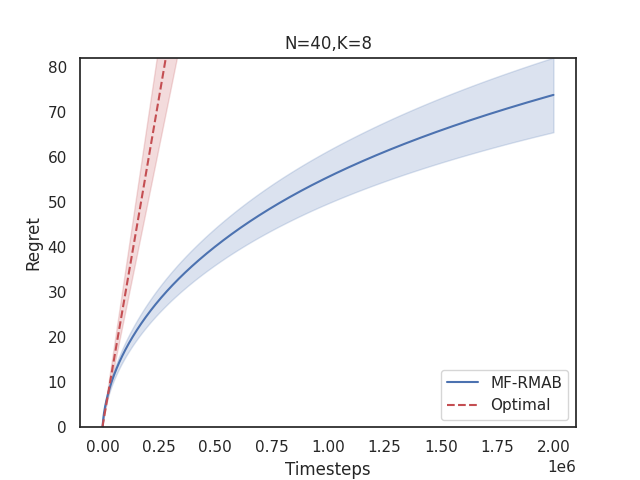}
    \includegraphics[width=0.3\textwidth]{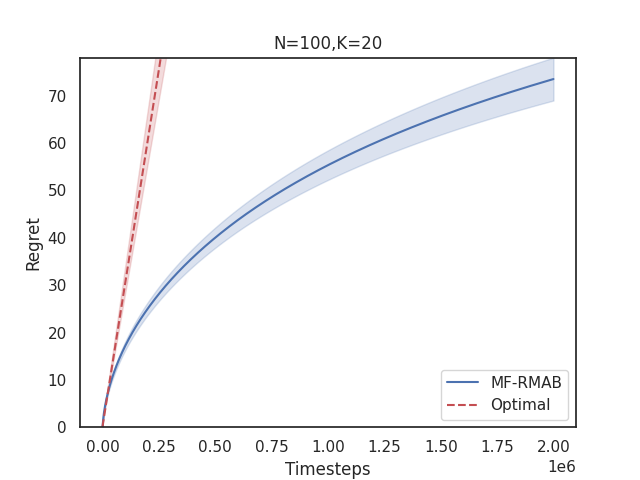}
    \caption{Fairness Regret on CPAP dataset when $c=1$}
    \label{fig:c1_real}
\end{figure}
We observe from Figure \ref{fig:c1_syn}, \ref{fig:c1_syn-alt} and \ref{fig:c1_real} that the value of $c$ directly influences the value of fairness regret. It is interesting to note that the variance has increased. A possible explanation is that a lower value of $c$ leads to less variation of the rewards of the arms, and it becomes more difficult to distinguish one arm from another, leading to higher variance in results.
\newpage
\subsection{c=10}
\begin{figure}[!htbp]
    \centering
    \includegraphics[width=0.3\textwidth]{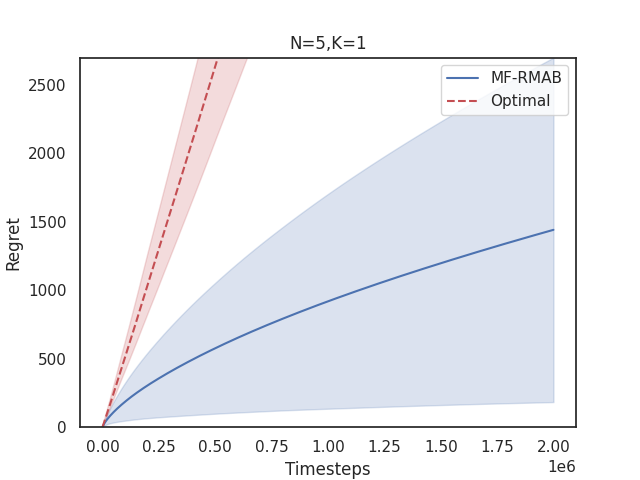}
    \includegraphics[width=0.3\textwidth]{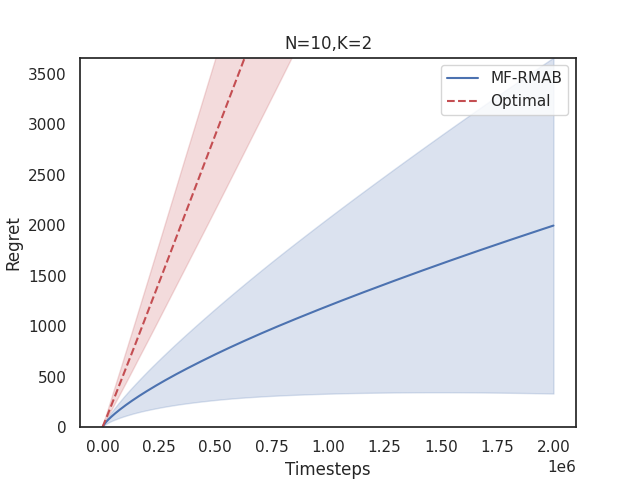}
    \includegraphics[width=0.3\textwidth]{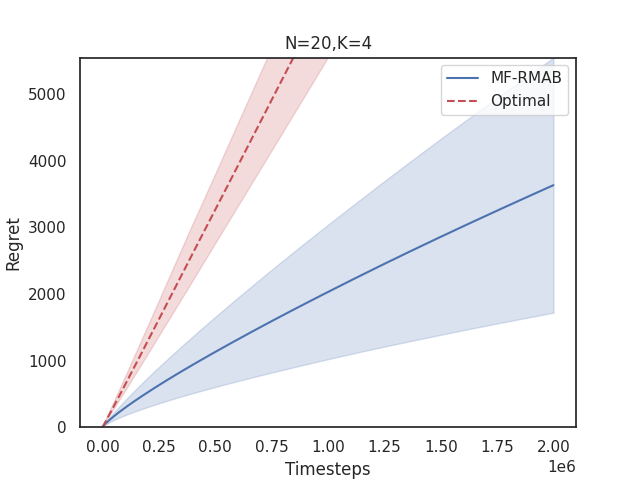}
    \includegraphics[width=0.3\textwidth]{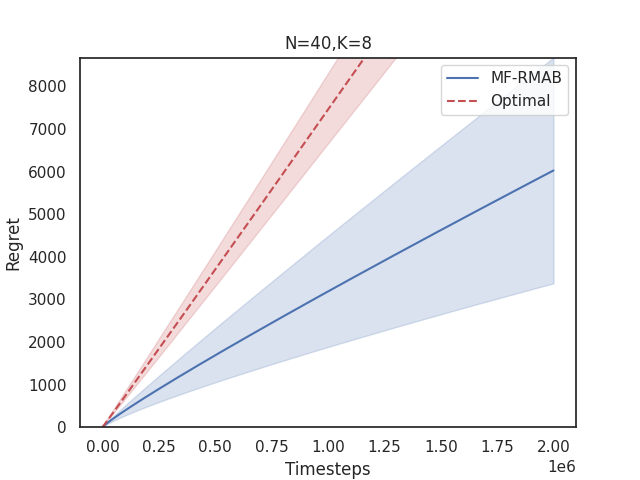}
    \includegraphics[width=0.3\textwidth]{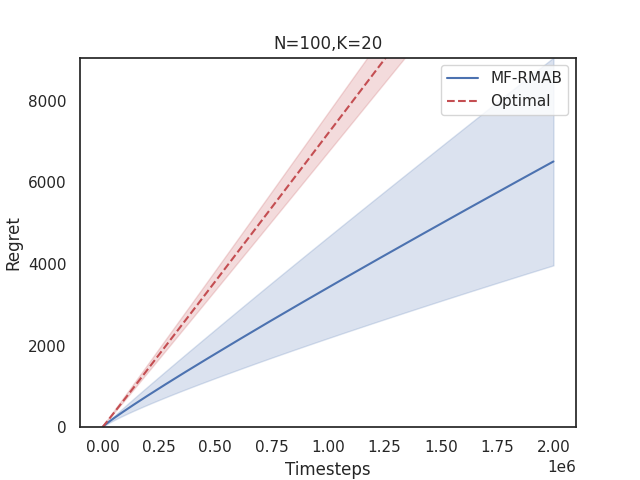}
    \caption{Fairness Regret on Synthetic dataset when $c=10$}
    \label{fig:c10_syn}
\end{figure}
\begin{figure}[!htbp]
    \centering
    \includegraphics[width=0.3\textwidth]{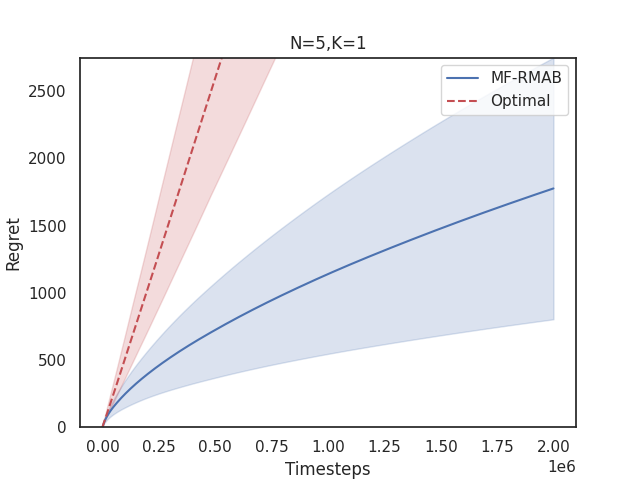}
    \includegraphics[width=0.3\textwidth]{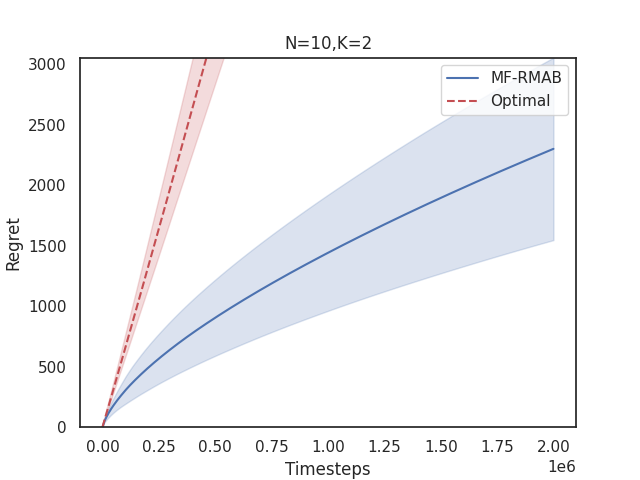}
    \includegraphics[width=0.3\textwidth]{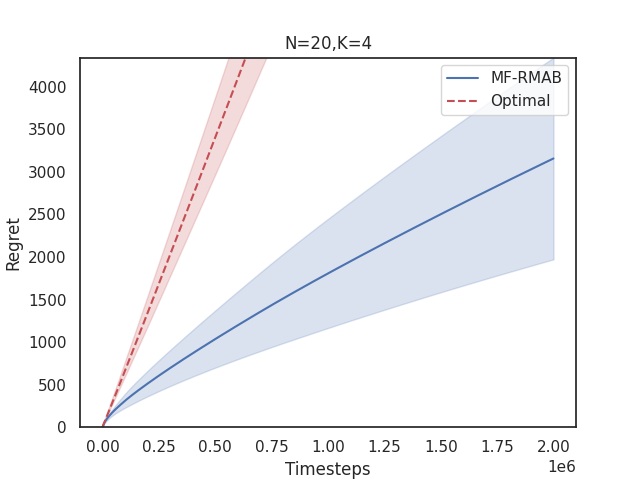}
    \includegraphics[width=0.3\textwidth]{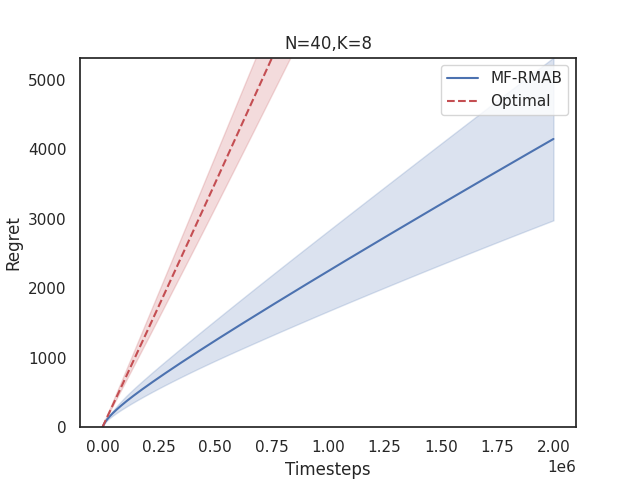}
    \includegraphics[width=0.3\textwidth]{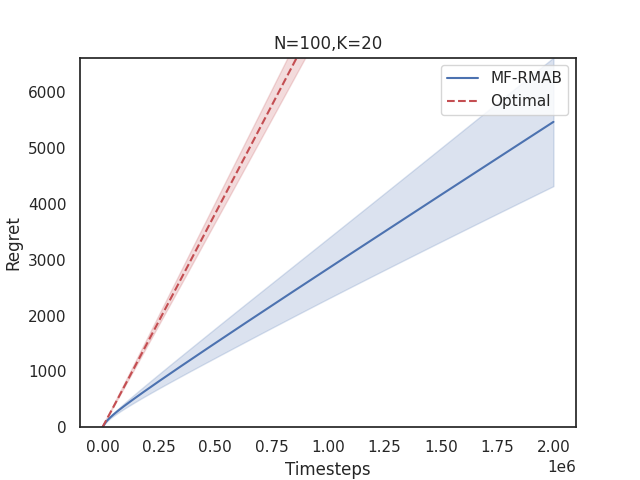}
    \caption{Fairness Regret on Synthetic-alternate dataset when $c=10$}
    \label{fig:c10_syn-alt}
\end{figure}
\begin{figure}[!htbp]
    \centering
    \includegraphics[width=0.3\textwidth]{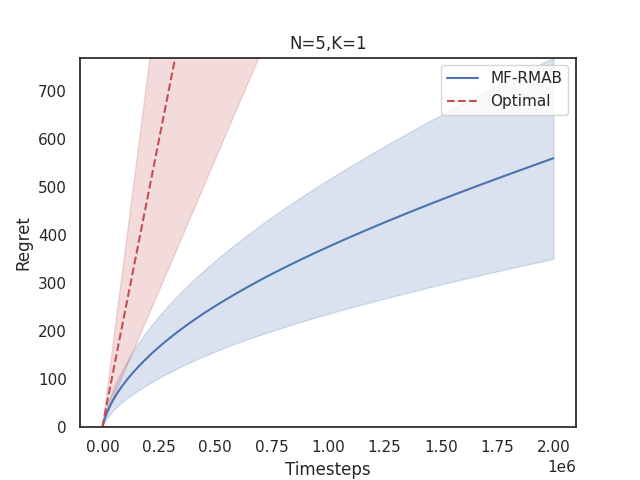}
    \includegraphics[width=0.3\textwidth]{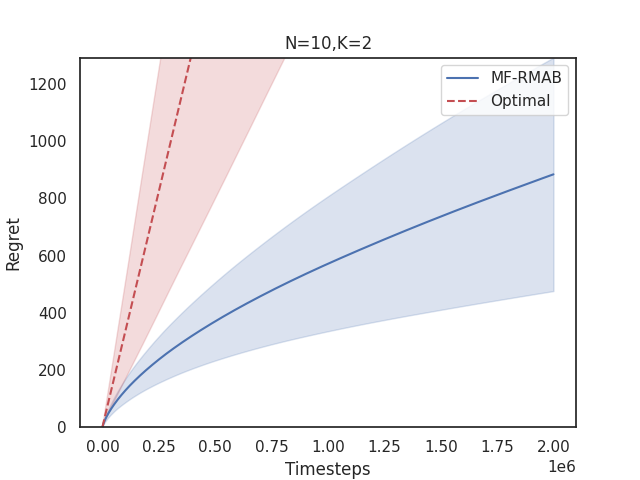}
    \includegraphics[width=0.3\textwidth]{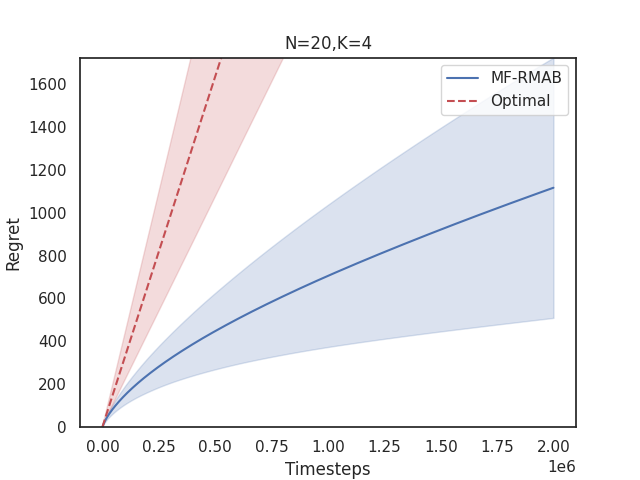}
    \includegraphics[width=0.3\textwidth]{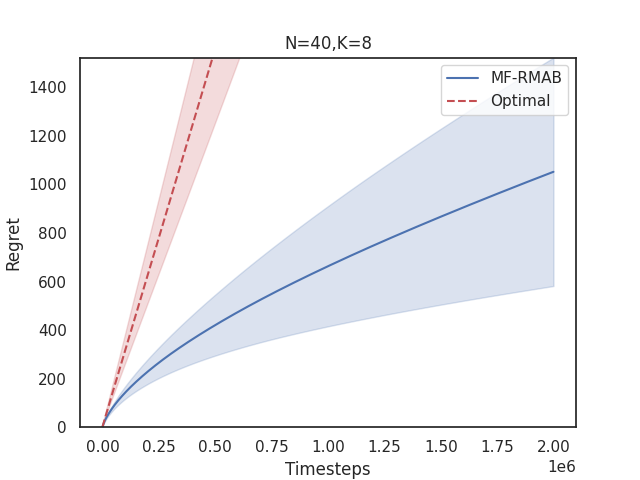}
    \includegraphics[width=0.3\textwidth]{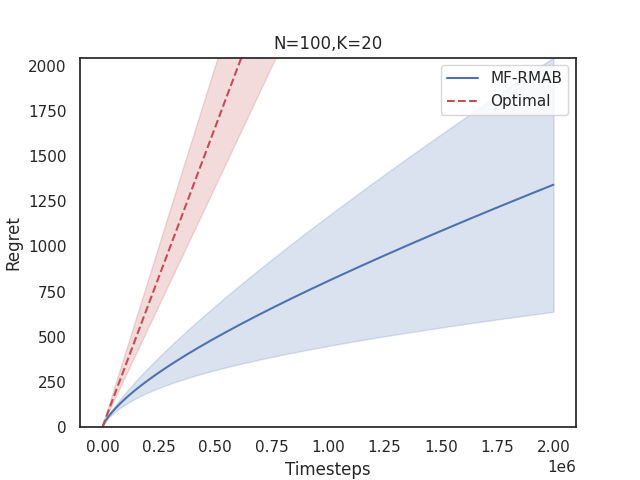}
    \caption{Fairness Regret on CPAP dataset when $c=10$}
    \label{fig:c10_real}
\end{figure}

We observe from Figure \ref{fig:c10_syn}, \ref{fig:c10_syn-alt} and \ref{fig:c10_real} that a higher $c$ leads to higher regret, and the regret curve becomes increasingly more linear. This is because $c$ is essentially a hyper-parameter that calibrates the trade-off between fairness and optimality. $c=0$ ensures uniform fairness across all arms, while a very high value of $c$ would result in a nearly optimal algorithm.

\end{document}